\documentclass{article}

\usepackage{multirow} 
\usepackage{own}

\PassOptionsToPackage{numbers, compress}{natbib}

\usepackage{amsmath,amsfonts,bm}

\def\eqref#1{equation~\ref{#1}}

\def\1{\bm{1}}

\DeclareMathAlphabet{\mathsfit}{\encodingdefault}{\sfdefault}{m}{sl}

\SetMathAlphabet{\mathsfit}{bold}{\encodingdefault}{\sfdefault}{bx}{n}

\usepackage{lipsum}		
\usepackage{amsthm} 
\usepackage[utf8]{inputenc} 
\usepackage[T1]{fontenc}    
\usepackage{url}            
\usepackage{booktabs}       
\usepackage{amsfonts}       
\usepackage{amsmath} 
\usepackage{nicefrac}       
\usepackage{microtype}      
\usepackage{lipsum}	
\usepackage{graphicx}
\usepackage{hyperref}
\usepackage{natbib}
\usepackage{doi}
\usepackage{siunitx}
\usepackage[tableposition=above]{caption}
\usepackage{pifont}
\usepackage{tabularx}
\usepackage{colortbl}
\usepackage{listings}
\usepackage{subfigure}
\usepackage{mathtools}
\usepackage{romannum}
\usepackage{amsmath}
\usepackage{adjustbox}
\usepackage{algorithm}  
\usepackage{algorithmic}  
\usepackage{xcolor}         
\usepackage{amsmath}
\usepackage{amsfonts}
\usepackage{wrapfig}

\definecolor{citecolor}{HTML}{2980b9}
\definecolor{linkcolor}{HTML}{c0392b}
\definecolor{urlcolor}{RGB}{157,49, 251}

\hypersetup{
  colorlinks  = true,
  citecolor   = citecolor,
  linkcolor   = linkcolor,
  urlcolor    = urlcolor,
  breaklinks  = true
}

\usepackage{xcolor} 

\definecolor{graybg}{gray}{0.9}
\newcommand{\defeq}{\vcentcolon=}

\newtheorem{thm}{Theorem}[section]
\newtheorem{prop}[thm]{Proposition}

\newtheorem{cor}[thm]{Corollary}

\newtheorem{assume}[thm]{Assumption}

\newtheorem{rmk}[thm]{Remark}

\newcommand{\clo}[1]{\overline{#1}}
\newcommand{\bb}[1]{\left\{#1\right\}}

\newcommand{\RNum}[1]{\uppercase\expandafter{\romannumeral #1\relax}}
\newcommand{\rnum}[1]{\lowercase\expandafter{\romannumeral #1\relax}}

\newcommand{\R}{\mathbb{R}}

\DeclarePairedDelimiterX{\norm}[1]{\lVert}{\rVert}{#1}

\title{Exploring Time Conditioning in Diffusion Generative Models from Disjoint Noisy Data Manifolds}

\author{
\vspace{-20pt}\\
\textbf{Liuzhuozheng Li}$^{1,2}$ \quad
\textbf{Zhiyuan Zhan}$^{2,7}$ \quad
\textbf{Shuhong Liu}$^{2}$ \quad
\textbf{Dengyang Jiang}$^{3}$ \\[1mm]
\textbf{Zanyi Wang}$^{4}$ \quad
\textbf{Guang Dai}$^{1}$ \quad
\textbf{Jingdong Wang}$^{6}$ \quad
\textbf{Mengmeng Wang}$^{5,1}$\thanks{\parbox[t]{\dimexpr\linewidth-1.8em\relax}{Corresponding author}} \\[2mm]
$^{1}$SGIT AI Lab \quad
$^{2}$UTokyo \quad
$^{3}$HKUST \quad
$^{4}$UCSD \quad
$^{5}$ZJUT \quad
$^{6}$Baidu \quad
$^{7}$RIKEN AIP\\
\\
\textit{Code is available at:} \url{https://github.com/liuzhuozheng-LI/time-uncond-diffusion}
}

\renewcommand{\headeright}{}

\hypersetup{
pdftitle={A template for the arxiv style},
pdfsubject={q-bio.NC, q-bio.QM},
pdfauthor={David S.~Hippocampus, Elias D.~Striatum},
pdfkeywords={First keyword, Second keyword, More},
}

\begin{document}

\maketitle
\definecolor{lightgreen}{RGB}{223, 243, 212}

\begin{abstract}
Practically, training diffusion models typically requires explicit time conditioning to guide the network through the denoising sampling process. Especially in deterministic methods like DDIM, the absence of time conditioning leads to significant performance degradation. However, other deterministic sampling approaches, such as flow matching, can generate high-quality content without this conditioning, raising the question of its necessity. In this work, we revisit the role of time conditioning from a geometric perspective. We analyze the evolution of noisy data distributions under the forward diffusion process and demonstrate that, in high-dimensional spaces, these distributions concentrate on low-dimensional hyper-cylinder-like manifolds embedded within the input space. Successful generation, we argue, stems from the disentanglement of these manifolds in high-dimensional space. Based on this insight, we modify the forward process of DDIM to align the noisy data manifold with the flow-matching approach, proving that DDIM can generate high-quality content without time conditioning, provided the noisy manifold evolves according to the flow-matching method. Additionally, we extend our framework to class-conditioned generation by decoupling classes into distinct time spaces, enabling class-conditioned synthesis with a class-unconditional denoising model. Extensive experiments validate our theoretical analysis and show that high-quality generation is achievable without explicit conditional embeddings.
\end{abstract}

\begin{figure*}[t]
    \vskip -0.1in
    \centering
    \includegraphics[width=0.9\textwidth, height=0.48\textwidth]{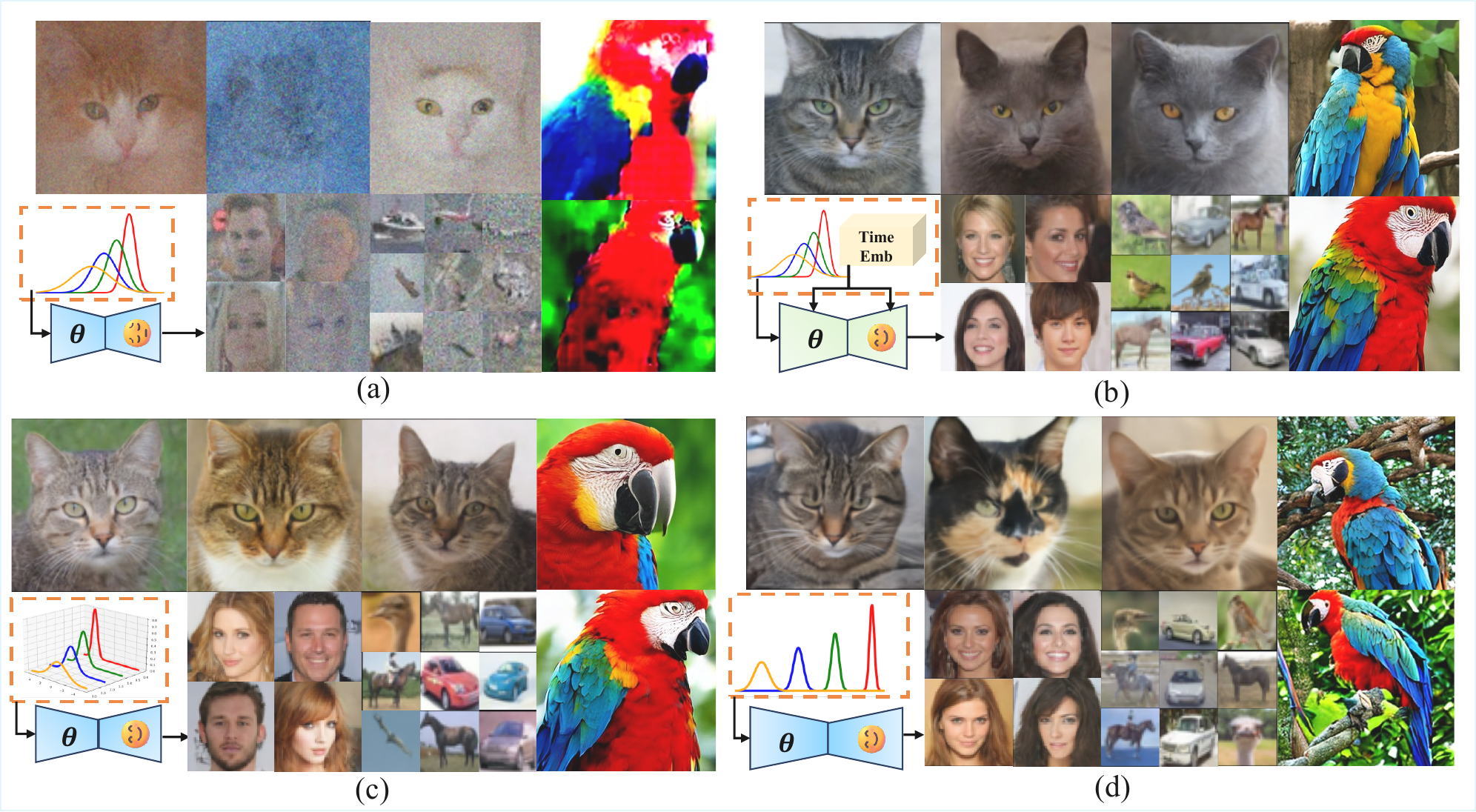}
    \caption{Generated images from AFHQ-Cat \cite{choi2020stargan}, CelebA\cite{liu2015deep}, CIFAR10\cite{Krizhevsky2009LearningML} and ImageNet\cite{deng2009imagenet} dataset using DDIM sampler. (a) shows the failed generation without time conditioning; the generated images are of poor quality and oversaturated. (b),(c) and (d) show the image generation using time-conditioning, disjointed data manifold, and time-space orthogonality methods, respectively.}
    \label{fig:intro}
    \vskip -0.2in
\end{figure*}

\section{Main}
\label{sec:main}

Machine learning seeks to extract statistical structure and useful representations from large-scale data and then use that information for a wide range of downstream tasks \cite{sugiyama2013}. Among its many branches, generative modeling is concerned with learning the underlying data distribution to synthesize new samples. Similarly, just as painting turns observation and imagination into an image on a canvas, modern visual generative models synthesize visual content inside a structured pixel space from the learned training samples. This analogy is particularly natural for diffusion models\cite{ho2020denoising, sohl2015deep}-rather than drawing an image in a single step, they progressively refine a noisy canvas over multiple denoising steps, much like a painter who repeatedly adjusts composition, tone, and local details\cite{li2026thinking}. Teaching a generative model can be seen as teaching a novice painter by repeatedly reconstructing reference works, and similar painter-inspired views have appeared in computer graphics for decades~\cite{Foley-CGP-1995, jiang2025no}. In practical systems, however, this painter-like process is almost always accompanied by an explicit instruction that specifies which denoising step the model is currently performing. That instruction is the timestep embedding.

Designing a generative model remains difficult because real-world data are high-dimensional, structured, and highly multimodal. Generative modeling methods such as VAEs~\cite{kingma2013auto, van2017neural, burgess2018understanding}, GANs~\cite{heusel2017gans}, and more recent score-based and flow-based paradigms~\cite{ho2020denoising, song2020denoising, lipman2022flow, chen2023sampling, DORMAND198019, bortoli2021diffusion} are all designed to recover the training data distribution. Across these families, diffusion-type methods have been used broadly across image, video, and audio generation and editing~\cite{rombach2022high, ho2022video, melnik2024video, schneider2023archisound, li2025refvton, zhu2025flux}. Among them, diffusion models are especially attractive because they provide a stable training objective for approximating an otherwise intractable high-dimensional data distribution $\mathbb{P}_{\mathrm{data}}$~\cite{cao2024survey, wang2025deforming}, while flow matching provides a closely related formulation that learns the transport dynamics along an explicit probability path~\cite{lipman2022flow, chen2023sampling, DORMAND198019, liu2023flow}. In score-based diffusion, data are gradually perturbed toward Gaussian noise, and the model learns the field needed to reverse that corruption; in flow matching, one specifies a path between noise and data and directly regresses the velocity field that transports samples along it. Both viewpoints therefore learn how to generate new samples by following a time-dependent field from a simple source distribution toward the data distribution. They can also be interpreted through the lens of nonequilibrium statistical physics and energy-based modeling~\cite{0Hopfield, 10.5555/1046920.1088696, oksendal2013stochastic, ANDERSON1982313}, and are typically implemented with high-capacity denoisers such as U-Net or DiT-style architectures~\cite{ronneberger2015u, peebles2023scalable}.

In standard score-based diffusion, the score function $\nabla_{\boldsymbol{x}}\log p_t(\boldsymbol{x})$ provides the local denoising direction needed to generate new samples, while in flow matching, the velocity field specifies how samples should move along the chosen probability path~\cite{lipman2022flow, chen2023sampling, song2019generative, song2020sliced}. In other words, the information is time-continuous, and both objects depend on $t$. To obtain the correct prediction, the model must know where the current sample lies along the forward path. This is why modern diffusion and flow-based architectures almost always input time information explicitly, most commonly through sinusoidal positional embeddings, learned timestep embeddings, or rotary-style encodings~\cite{vaswani2017attention, dosovitskiy2020image, su2024roformer}. Recent work has therefore treated time conditioning as both a modeling necessity and an optimization issue, emphasizing interference across timesteps and the difficulty of decoupling their objectives~\cite{go2023addressing, ma2024decouple}. Yet this standard design choice is no longer unquestioned. In particular, \cite{sun2025noise} showed that diffusion-type models can be trained and sampled without explicit time conditioning, and that under suitable forward processes, this can even work surprisingly well. Motivated by such observations, several recent application-oriented and product-scale generative systems have started to adopt time-unconditional pretraining strategies, especially in flow-based large models~\cite{imageteam2025zimageefficientimagegeneration, tang2025exploringdeepfusionlarge}. This trend is practically important because removing timestep embeddings simplifies the denoiser, reduces conditioning overhead, and makes pretraining pipelines cleaner.

This paper asks when such explicit time conditioning is actually unnecessary. Our answer is geometric rather than purely architectural, that if noisy samples from different timesteps occupy sufficiently separated regions of space, then the sample geometry itself already reveals the relevant temporal information. We therefore study time-unconditional diffusion through the geometry of noisy data manifolds, explain when overlap makes learning ambiguous, and develop two remedies: schedule-level manifold disjoint and orthogonal time-space disentanglement. Figure~\ref{fig:intro} illustrates this contrast, showing that naive time-unconditional denoising diffusion implicit models (DDIM)\cite{song2020denoising} sampling suffers a clear degradation in image quality, whereas geometry-aware disjoint strategies achieve performance comparable to the time-conditioned baseline.

\subsection{Background on diffusion model and time conditioning}

We consider data samples $\boldsymbol{x}\in\mathbb{R}^D$, where $D$ denotes the input data dimension. Throughout the paper, $\boldsymbol{x}_0\sim\mathbb{P}_{\mathrm{data}}$ denotes a clean sample, $\boldsymbol{z}\sim\mathcal{N}(\boldsymbol{0},\boldsymbol{I})$ denotes Gaussian noise, and $\boldsymbol{x}_t$ denotes the noisy sample obtained after running the forward diffusion process to timestep $t$. In the denoising diffusion probabilist model (DDPM), $\boldsymbol{x}_t$ is the partially corrupted version of $\boldsymbol{x}_0$ at noise level $t$. To make the role of time practical, let $\tau$ be a timestep random variable on $[0,T]$, sampled uniformly when training pairs are drawn uniformly over timesteps. Conditional on $\tau=t$, the noisy random variable $\boldsymbol{X}$ has density $p(\boldsymbol{x}\mid \tau=t)$. We write this conditional density as $p_t(\boldsymbol{x})$, and for simplicity, we denote $p_t(\boldsymbol{x})=p(\boldsymbol{x}\mid \tau=t)$. The posterior density $p(\tau=t\mid \boldsymbol{x})$, abbreviated as $p(t\mid \boldsymbol{x})$ when no confusion arises, measures how informative a noisy sample is about its generating timestep. This is why the score field $\nabla_{\boldsymbol{x}}\log p_t(\boldsymbol{x}) = \nabla_{\boldsymbol{x}}\log p(\boldsymbol{x} \mid t)$ is a time-conditional object. Different values of $t$ correspond to different conditional densities, so a model that predicts this field must know which conditional law is being queried. Supplying that information to the network is what we call time conditioning.

Score-based diffusion and flow matching differ in objective form, but both can be viewed as learning time-dependent vector fields that transport simple source distributions to the data distribution~\cite{song2019generative, song2020sliced, lipman2022flow, chen2023sampling}. For DDPM/DDIM, we first consider the variance-preserving forward process described in Section~\ref{sec:details}, for which $\boldsymbol{x}_t$ is the VP noisy sample at timestep $t$ and the closed-form forward marginal is
Score-based diffusion and flow matching differ in objective form, but both can be viewed as learning time-dependent vector fields that transport simple source distributions to the data distribution~\cite{song2019generative, song2020sliced, lipman2022flow, chen2023sampling}. For DDPM/DDIM, we first consider the variance-preserving forward process described in Section~\ref{sec:details}. For discrete timesteps $t\in\{1,\dots,T\}$, let $\beta_t\in(0,1)$ denote the variance increment at step $t$, set $\alpha_t=1-\beta_t$, and define the cumulative signal coefficient
\begin{equation}
    \bar{\alpha}_t = \prod_{s=1}^{t}\alpha_s = \prod_{s=1}^{t}(1-\beta_s).
    \label{eq:alpha_bar_def}
\end{equation}
DDIM uses the same forward marginals as DDPM and changes only the reverse sampling rule, so the noisy sample at timestep $t$ has the closed-form marginal
\begin{equation}
\boldsymbol{x}_t
=
\sqrt{\bar{\alpha}_t}\boldsymbol{x}_0
+
\sqrt{1-\bar{\alpha}_t}\boldsymbol{z},
\qquad
\boldsymbol{z}\sim\mathcal{N}(\boldsymbol{0},\boldsymbol{I}).
\label{eq:forward_process}
\end{equation}
For flow matching, one instead chooses an explicit probability path, often the linear interpolation between data and noise, and learns the associated velocity field directly~\cite{lipman2022flow, chen2023sampling, DORMAND198019}. In both cases, the core object remains a family of timestep-indexed noisy marginals~\cite{chefer2026self, wan2025wan}. To keep the later discussion independent of a specific sampler, we write the forward process in the unified form
\begin{equation}
    \boldsymbol{x}_t = c_t\boldsymbol{x}_0 + \sigma_t\boldsymbol{z},
    \label{eq:forward_unified}
\end{equation}
where $c_t$ is the signal and $\sigma_t$ is the noise coefficient. For the VP process as shown in \eqref{eq:forward_process}, $c_t=\sqrt{\bar{\alpha}_t}$ and $\sigma_t=\sqrt{1-\bar{\alpha}_t}$; for flow matching, these coefficients follow linear interpolation~\cite{song2020denoising, lipman2022flow, esser2024scaling}. Under either formulation, generation requires evaluating a time-indexed score or velocity field at the current location along the path.

\subsection{The standard view and its challenge: why time conditioning became fundamental}

Time conditioning for diffusion models has been a consequence of the underlying dynamics rather than as an optional architectural trick. In early score-based formulations, each time $t$ defines a different marginal $p(\boldsymbol{x}\mid t)$, so the reverse process necessarily depends on the corresponding score field. DDPM made this dependence explicit at discrete timesteps, DDIM retained it in deterministic sampling, and flow matching preserved the same principle through a time-indexed velocity field. As model architectures evolved from convolutional U-Nets to transformer-based generators, the implementation strategy remained, providing the network with an explicit temporal embedding so it can distinguish among timesteps. In practice, this temporal signal is usually encoded through sinusoidal positional embeddings, learned timestep embeddings, or rotary variants~\cite{ho2020denoising, song2020denoising, vaswani2017attention, dosovitskiy2020image, su2024roformer, song2020improved, dosovitskiy2020image}.

The direct removal of time conditioning is not obviously consistent with the theory of score-based diffusion. The reverse process in DDPM or probability-flow DDIM requires a \emph{time-indexed} score field $\nabla_{\boldsymbol{x}}\log p(\boldsymbol{x} \mid t)$ because each timestep corresponds to a different noisy marginal. If the network is trained without timestep input, then in the overlapping region of two noisy marginals $p(\boldsymbol{x}\mid t)$ and $p(\boldsymbol{x}\mid t+1)$, the same noisy sample can correspond to different denoising targets at different timesteps. A single time-unconditional predictor is then forced to compromise between incompatible targets, rather than matching the correct target for the timestep that generated the sample. In other words, once noisy marginals overlap, the model is no longer learning the score field required by the reverse dynamics, but a mixture of incompatible fields. This mismatch is exactly the type of multi-task conflict emphasized by \cite{go2023addressing, ma2024decouple}. It also explains why simply forcing a network to infer the timestep from the current noise level is theoretically fragile; that inference is only valid when the geometry of noisy data makes $p(t\mid \boldsymbol{x})$ sharply concentrated \cite{sun2025noise}.

Recent work, however, has started to relax this standard view~\cite{zhang2023practical}. As reported by \cite{sun2025noise}, removing time conditioning from DDIM leads to poor generation quality, while time-unconditional generation can work much better in flow matching. This provides a strong clue that the question is not merely whether the model \emph{can} learn $p(\boldsymbol{x}\mid t)$ without time input, but under what geometric conditions the noisy data still carry enough information to identify the correct denoising direction. Our answer is that time conditioning is not fundamentally about giving the network an abstract symbol $t$; rather, it is about ensuring that different noisy marginals are distinguishable. If the geometry of noisy data already makes the timestep recoverable from $\boldsymbol{x}_t$ itself, then explicit time embeddings become redundant. If not, removing them creates an ill-posed learning problem. This is exactly why several recent large-scale systems revisit weaker or absent time conditioning while relying on carefully chosen forward paths or training strategies~\cite{sun2025noise, imageteam2025zimageefficientimagegeneration, tang2025exploringdeepfusionlarge}.

\subsection{Why Time Conditioning Can Become Unnecessary: A Geometric Perspective on Diffusion Manifolds}

The standard view treats time information as indispensable because the reverse field changes with $t$. Our perspective is slightly different as the explicit time conditioning becomes unnecessary only when the noisy sample already contains enough geometric information to reveal which timestep generated it. If that inference is reliable, then the model does not need an external time token; if it is unreliable, removing time input creates an ill-posed learning problem.

For real datasets, the manifold hypothesis suggests that clean data occupy a low-dimensional submanifold of the ambient space~\cite{bengio2013representation}, which has been tested both from theoretical perspective~\cite{fefferman2016testing,narayanan2010sample} and experimental perspective~\cite{brown2023verifying,jang2023geometrically,huh2023isometric}. This viewpoint has become increasingly influential in understanding why diffusion models remain effective in very high dimensions despite the curse of dimensionality~\cite{bronstein2021geometric,oko2023diffusion,tang2024adaptivity,boffi2025shallow} and manifold overfitting concerns~\cite{loaiza-ganem2022diagnosing,loaiza-ganem2024deep}. Studies have shown that diffusion can enjoy favorable scaling under manifold assumptions~\cite{debortoli2022convergence,oko2023diffusion,chung2024cfg++, chung2022diffusion}, while many recent works analyze or exploit the geometric sensitivity of DDIM-like samplers for inversion, editing, denoising, and guidance~\cite{chung2022improving, he2023manifold, zhan2026understanding, meilua2024manifold}. These developments make manifold geometry a natural language for asking when temporal information is already encoded in the sample itself.

From this perspective, diffusion data manifolds can be regarded as the manifolds associated with different timesteps, corresponding to the noisy data distributions as the forward process gradually departs from the clean data manifold. Under the manifold hypothesis, one may approximate the clean data $\boldsymbol{x}_0$ as concentrating near a linear subspace of dimension $d'\ll D$. For the VP process in \eqref{eq:forward_process}, the noisy sample $\boldsymbol{x}_t=\sqrt{\bar{\alpha}_t}\boldsymbol{x}_0+\sqrt{1-\bar{\alpha}_t}\boldsymbol{z}$ then combines a low-dimensional signal component with isotropic ambient noise \cite{chung2022improving, he2023manifold}. Existing geometric analyses show that in high dimensions, such noisy samples concentrate near a cylinder-like hypersurface, namely a $(D-1)$-dimensional manifold surrounding the drifted data subspace~\cite{loaiza-ganem2022diagnosing, he2023manifold, yao2025manifold}. For the present discussion, the key point is simply that different timesteps generate different such hypersurfaces.

Our first method uses a schedule-level manifold disjoint. We adjust the forward-process parameters in the diffusion model so that the noisy manifolds associated with adjacent timesteps stay disjoint over a wider range of the trajectory. When this happens, the geometry of $\boldsymbol{x}_t$ itself is enough to identify its timestep, which makes explicit time input unnecessary. Our second method uses an orthogonal time-space disentanglement. Instead of relying only on the separation created by the noise schedule, we encode temporal variation along a direction that is orthogonal to the diffusion space. This separates manifolds from different timesteps by construction and gives the model a geometric way to recover time information even without standard timestep embeddings. The next section formalizes both mechanisms and explains when each one removes the ambiguity caused by overlapping noisy manifolds.

\subsection{Related work and our position}

Recent work on timestep learning has emphasized that diffusion training is inherently multi-task across noise levels. \cite{go2023addressing, ma2024decouple} analyze interference across timesteps and propose training strategies that reduce gradient conflict or decouple objectives. Their focus is on improving optimization while retaining explicit timestep information. By contrast, our question is when timestep input can be removed altogether, and our answer is geometric rather than purely optimization-based. 

The papers most directly related to our setting are those on time-unconditional or weakly time-conditioned generation. \cite{sun2025noise} shows that removing time conditioning degrades DDIM substantially, while time-unconditional generation can be much more successful in flow matching. Recent large-scale systems~\cite{imageteam2025zimageefficientimagegeneration, tang2025exploringdeepfusionlarge} also report that weak or absent time conditioning can be practical in flow-style pretraining. These works establish the empirical phenomenon, but they do not identify a geometric criterion that predicts when removing time information should succeed and when it should fail. Another closely related line studies the geometry of noisy samples and data manifolds in diffusion and DDIM. \cite{loaiza-ganem2022diagnosing} diagnose manifold effects in diffusion models, while \cite{chung2022improving, he2023manifold, yao2025manifold} show that inversion, editing, denoising, guidance, and model mismatch are all sensitive to how noisy samples sit relative to the underlying data manifold. These results suggest that geometry strongly affects diffusion behavior, but they do not formulate time-unconditional diffusion itself as a problem of overlap between timestep-indexed noisy manifolds.

Our paper connects these directions. Relative to the optimization papers, we study identifiability rather than gradient balancing. Relative to the time-unconditional literature, we explain the DDIM failure mode observed by \cite{sun2025noise} through geometric overlap. Relative to the manifold literature, we use geometry not only to analyze sampling behavior but also to derive two concrete remedies, namely schedule-level manifold disjoint and orthogonal time-space disentanglement.

We summarize our contributions below.
\begin{itemize}
    \item We clarify why recent time-unconditional diffusion methods, though seemingly inconsistent with score-based diffusion theory, can still perform well at scale. The key factor is not the explicit inclusion of a timestep token, but whether $p(t \mid \boldsymbol{x})$ is geometrically concentrated. From this perspective, we provide a manifold-based explanation for the gap observed by \cite{sun2025noise}: time-unconditional DDIM fails due to overlapping neighboring noisy data manifolds, whereas flow-based trajectories remain sufficiently disjoint.
    \item We show that under a linear-manifold approximation, noisy data concentrate on thick cylinder-like manifolds\cite{he2023manifold}, derive explicit separation conditions for adjacent shells, and highlight why the required separation becomes more stringent at larger noise levels.
    \item Based on this analysis, we propose two practical ways to realize time-unconditional diffusion: schedule-level manifold disjoint and orthogonal time-space disentanglement. We further extend the same geometric idea to embedding-free conditional generation by separating classes into different geometric directions~\cite{ho2022classifier, dhariwal2021diffusion}. Under these modifications, the DDIM setting that previously failed in \cite{sun2025noise} becomes workable as well.
\end{itemize}
\section{Time-Unconditioning Diffusion}
\label{sec:time_disentangled_diffusion}

This section develops the central geometric claim of the paper: time-unconditional diffusion works when noisy data manifolds at different timesteps are sufficiently disjoint.
We first analyze the geometry of noisy data under a linear-manifold approximation, then derive a practical non-overlap condition for adjacent noisy shells, next compare several forward schedules through this criterion, and finally show how orthogonal time-space disentanglement enforces disentanglement without modifying the original variance-preserving schedule.

\subsection{Geometry of noisy data manifolds}

For simplicity, we assume that the input data $\boldsymbol{x}_0$ have been normalized so that their mean is $\boldsymbol{0}$. Under the manifold hypothesis, real data occupy only a small portion of the ambient space $\mathbb{R}^D$ and concentrate near a low-dimensional manifold~\cite{bengio2013representation, bronstein2021geometric,stanczuk2022your,fefferman2016testing}. Building on the notation introduced in Section~\ref{sec:main}, let $d'$ denote the intrinsic dimension of the clean data manifold. To make the geometry explicit, we adopt the assumption that the data manifold is a $d'-$dimensional linear subspace, which is commonly used in manifold-based analyses of diffusion~\cite{chung2022improving, he2023manifold, yao2025manifold}. 

\begin{assume}\label{assum1}
     Since the data mean has been shifted to the origin, after an affine transformation of coordinates, clean data $\boldsymbol{x}_0$ lie on $\mathcal{M}_0$:
\begin{equation}
    \mathcal{M}_0 := \left\{ \boldsymbol{x} \in \mathbb{R}^D : x_{d'+1}=x_{d'+2}=\cdots=x_D=0 \right\},
\label{eq:clean_manifold}
\end{equation}
where $d'\ll D$, the first $d'$ coordinates span the intrinsic directions of the manifold, and the remaining $D-d'$ coordinates are normal to it, as shown in Figure \ref{fig:manifold}(b).
\end{assume}

This assumption immediately induces an orthogonal decomposition of the full data space,
\begin{equation}
    \mathbb{R}^{D} = \mathbb{R}^{d'} \oplus \mathbb{R}^{D-d'},
\end{equation}
where $\mathbb{R}^{d'} \cong \mathcal{M}_0$ is aligned with the intrinsic directions of the clean manifold and $\mathbb{R}^{D-d'}$ collects the normal directions.
For a clean sample after affine transformation $\boldsymbol{x}_0=(\boldsymbol{x}^{1:d'}_0,\boldsymbol{0})$ with $\boldsymbol{x}^{1:d'}_0\in\mathbb{R}^{d'}$ is the first $d'$ free dimension in \eqref{eq:clean_manifold}, and the VP forward process in \eqref{eq:forward_process} can be written as
\begin{equation}
    \boldsymbol{x}_t
    =
    (\sqrt{\bar{\alpha}_t}\,\boldsymbol{x}^{1:d'}_0 + \sigma_t \boldsymbol{z}_{\parallel},\; \sigma_t \boldsymbol{z}_{\perp}),
\end{equation}
where $\sigma_t=\sqrt{1-\bar{\alpha}_t}$, $\boldsymbol{z}_{\parallel}\in\mathbb{R}^{d'}$, and $\boldsymbol{z}_{\perp}\in\mathbb{R}^{D-d'}$ are standard Gaussian components.
This decomposition is the key to the whole paper.
The tangential part controls motion \emph{along} the clean manifold, while the normal part controls the distance \emph{away from} it.
The timestep information that matters for time-unconditional generation is encoded primarily in this normal distance.

\begin{prop}\label{prop:noisy_data_manifold}
Under Assumption~\ref{assum1}, noisy data $\bm{x}_t$ generated by the VP forward process concentrate around the data manifold $\mathcal{M}_t$ defined by
\begin{equation}
\begin{aligned}
\mathcal{M}_t
&:= \left\{
\boldsymbol{x} \in \mathbb{R}^D :
d\bigl(\boldsymbol{x}, \mathcal{M}_0\bigr)
=
r(t)
\right\},
\end{aligned}
\label{eq:noisy_manifold}
\end{equation}
where $r(t)=\sigma_t\sqrt{(D-d')}$. More precisely, for any $\varepsilon > 0$,
\begin{equation}
    \mathbb{P}\Bigl(r(t)\sqrt{1-2\sqrt{\varepsilon}} \leq d(\boldsymbol{x}_t, \mathcal{M}_0) \leq r(t)\sqrt{1+2\sqrt{\varepsilon}+2\varepsilon}\Bigr)
    \geq
    1 - 2e^{-2(D-d')\varepsilon}.
    \label{eq:thickness}
\end{equation}
\end{prop}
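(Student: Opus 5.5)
The plan is to reduce the geometric statement to a tail bound for a chi-square variable and then apply a sharp Chernoff-type inequality. Under Assumption~\ref{assum1}, the distance from any point to $\mathcal{M}_0$ is the Euclidean norm of its last $D-d'$ coordinates. By the decomposition above, $\boldsymbol{x}_t=(\sqrt{\bar{\alpha}_t}\,\boldsymbol{x}^{1:d'}_0+\sigma_t\boldsymbol{z}_{\parallel},\,\sigma_t\boldsymbol{z}_{\perp})$, so
\begin{equation*}
d(\boldsymbol{x}_t,\mathcal{M}_0)=\sigma_t\norm{\boldsymbol{z}_{\perp}} .
\end{equation*}
In particular the distance does not depend on $\boldsymbol{x}_0$ at all. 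Set $k=D-d'$ and $X=\norm{\boldsymbol{z}_{\perp}}^2\sim\chi^2_k$. Squaring the two-sided event in \eqref{eq:thickness} and dividing by $r(t)^2=\sigma_t^2k$ shows it is exactly the event
\begin{equation*}
k(1-2\sqrt{\varepsilon})\le X\le k(1+2\sqrt{\varepsilon}+2\varepsilon).
\end{equation*}
The lower inequality is vacuous when $2\sqrt{\varepsilon}\ge 1$. The rest of the proof is therefore a two-sided chi-square deviation bound, followed by a union bound over the two tails.

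For the tails I would use the Laurent--Massart inequalities, which follow from the Chernoff bound with $\mathbb{E}e^{\lambda X}=(1-2\lambda)^{-k/2}$:
\begin{equation*}
\mathbb{P}\bigl(X-k\ge 2\sqrt{kx}+2x\bigr)\le e^{-x},\qquad \mathbb{P}\bigl(k-X\ge 2\sqrt{kx}\bigr)\le e^{-x}.
\end{equation*}
The thresholds in \eqref{eq:thickness} are exactly of this form with $x=k\varepsilon$, since $2\sqrt{k\cdot k\varepsilon}=2k\sqrt{\varepsilon}$ and $2x=2k\varepsilon$. Combining the two tails then gives a concentration probability of at least $1-2e^{-c\,k\varepsilon}$. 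In summary, the steps are: (i) the distance identity above; (ii) the reduction to $\chi^2_k$; (iii) the Chernoff/Laurent--Massart computation for each tail; (iv) the union bound.

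The main obstacle is the constant in the exponent. Laurent--Massart with $x=k\varepsilon$ yields $c=1$, whereas the statement asks for $c=2$. To try to reach $c=2$, I would redo step (iii) directly, optimizing the Chernoff bound exactly. This gives the rate $\tfrac{k}{2}(u-1-\log u)$ for $u=1+2\sqrt{\varepsilon}+2\varepsilon$ on the upper tail, and $\tfrac{k}{2}(u-1-\log u)$ for $u=1-2\sqrt{\varepsilon}$ on the lower tail. I would then check whether each rate dominates $2k\varepsilon$.

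I expect this comparison to fail for small $\varepsilon$. There $u-1-\log u\approx\tfrac{(u-1)^2}{2}=2\varepsilon$, so the true rate is about $k\varepsilon$, and the central limit theorem confirms a tail of order $e^{-k\varepsilon}$. If so, the factor $2$ cannot be obtained with these thresholds, and the argument proves the bound with exponent $(D-d')\varepsilon$. The stated form would then hold only after rescaling $\varepsilon$, that is, with thresholds built from $2\varepsilon$ in place of $\varepsilon$. I would record this caveat explicitly rather than claim the stronger constant.
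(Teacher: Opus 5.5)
Your argument is the paper's own proof: the distance identity $d(\boldsymbol{x}_t,\mathcal{M}_0)=\sigma_t\|\boldsymbol{z}_{\perp}\|_2$, the reduction to $\chi^2_{D-d'}$, Laurent--Massart with $x=(D-d')\varepsilon$, and a union bound. Your caveat is also correct. The paper asserts the exponent $2(D-d')\varepsilon$ at exactly that step, but Laurent--Massart only gives $1-2e^{-(D-d')\varepsilon}$. Equivalently, it gives the stated exponent only after replacing $\varepsilon$ by $2\varepsilon$ in the thresholds.

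The factor $2$ is in fact false, and for every $\varepsilon>0$, not only small ones. Set $s=\sqrt{\varepsilon}$ and $a=2s+2s^2$. The upper-tail rate $\tfrac12\bigl(a-\log(1+a)\bigr)$ is strictly below $2s^2$, because $\log(1+2s+2s^2)>2s-2s^2$ for $s>0$: the difference vanishes at $0$ and has derivative $\frac{4s+4s^2+8s^3}{1+2s+2s^2}>0$. Since $\varepsilon$ is fixed while $D-d'\to\infty$, this is a large-deviation regime. There Cram\'er's theorem, rather than the CLT you invoked, makes the Chernoff exponent sharp. Hence $\mathbb{P}\bigl(d(\boldsymbol{x}_t,\mathcal{M}_0)>r(t)\sqrt{1+2\sqrt{\varepsilon}+2\varepsilon}\bigr)>2e^{-2(D-d')\varepsilon}$ for all sufficiently large $D-d'$.
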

\begin{rmk}
   Note that $\mathcal{M}_t \subset \R^D$ is a $(D-1)$-dimensional hyper-cylinder shell, whose radius increases in $t$, as illuastrated in Figure \ref{fig:manifold}(a).
\end{rmk}

\begin{figure*}[t]
    \centering
    \includegraphics[width=0.86\textwidth, height=0.4\textwidth]{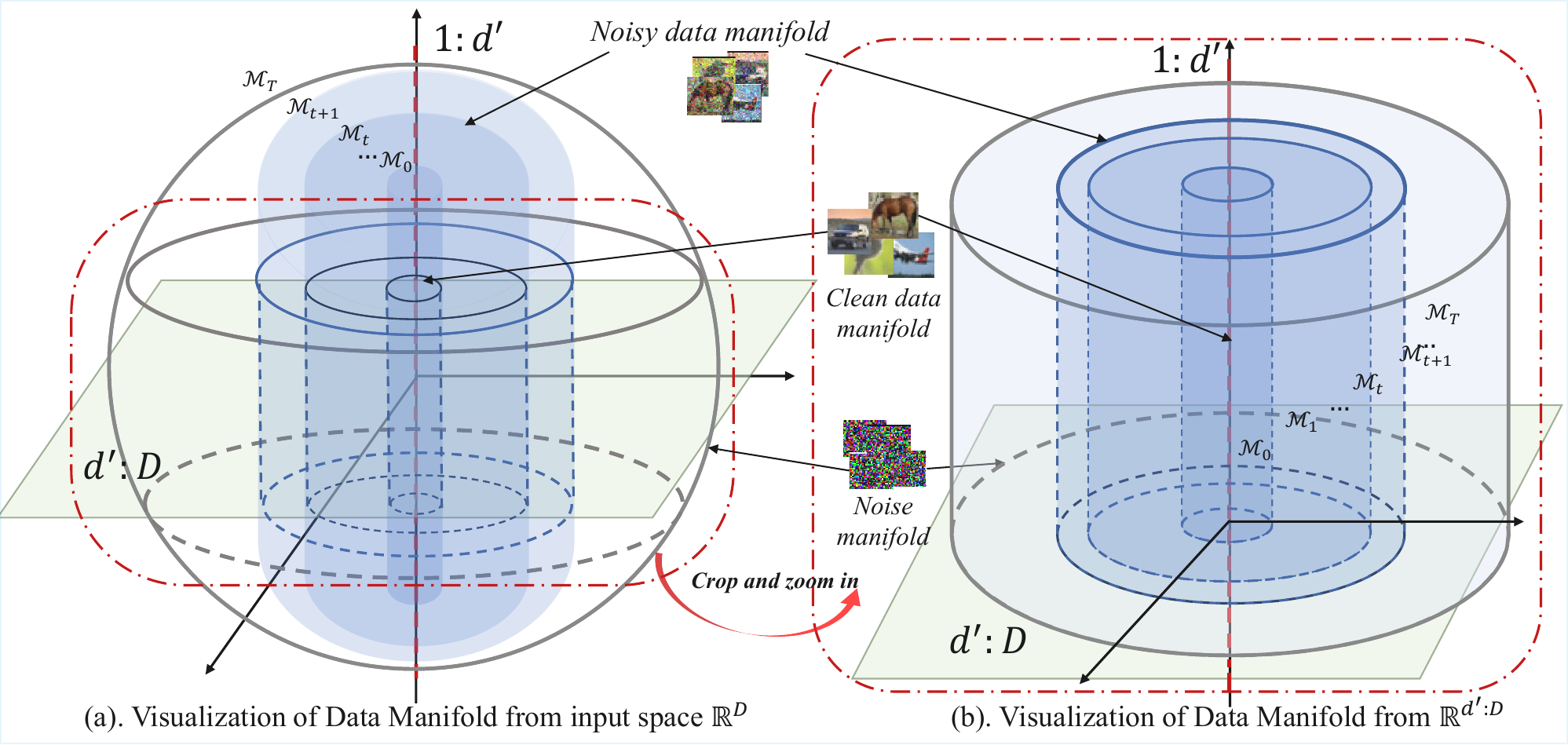}
    \caption{Visualization of the noisy data manifold in $\mathbb{R}^D$. The clean data lie on a low-dimensional linear subspace ($1:d'$), while noisy data form a cylinder-like shell around it in the normal directions ($d':D$).}
    \label{fig:manifold}
\end{figure*}

The proposition identifies the geometry around which $p(\boldsymbol{x}\mid t)$ concentrates.
Indeed,
\begin{equation}
    d\bigl(\boldsymbol{x}_t,\mathcal{M}_0\bigr)
    =
    \sigma_t\,\|\boldsymbol{z}_{\perp}\|_2.
\end{equation}
Since $\boldsymbol{z}_{\perp}\sim \mathcal{N}(\boldsymbol{0},\boldsymbol{I}_{D-d'})$, its norm concentrates sharply around $\sqrt{D-d'}$ in high dimensions.
Equivalently, the normal Gaussian mass concentrates on a thin sphere in $\mathbb{R}^{D-d'}$.
After lifting this sphere back to the full data space and allowing free variation along the intrinsic coordinates $\mathbb{R}^{d'}$, the probability mass forms a thin cylinder-like shell around the clean manifold.
This is the geometric origin of the concentric circles shown schematically in Figure~\ref{fig:method} and empirically in Figure~\ref{fig:toydataset}: if we suppress the intrinsic directions and only look at the normal space $\mathbb{R}^{D-d'}$, each timestep corresponds to a different spherical shell, which can be visualized as a circle in those figures.

This viewpoint also clarifies why the noisy sample can implicitly reveal its own timestep.
If the shells corresponding to different $t$ are disjoint, then knowing where $\boldsymbol{x}_t$ lies in the normal space is enough to identify which noisy manifold generated it.
The denoiser does not need an external time token because the sample geometry already contains the relevant time information.
If those shells overlap, however, the same noisy sample is compatible with multiple timesteps, and time-unconditional learning becomes ambiguous.

\subsection{Distribution mixing explains the need for time conditioning}

The ideal noisy data manifolds $\mathcal{M}_t$ are infinitely thin geometric objects, but real distributions are not. Because Gaussian concentration has a nonzero width, the high-probability region of $p(\boldsymbol{x}\mid t)$ should be regarded as a thick shell around $\mathcal{M}_t$ rather than as the manifold itself, as shown in Proposition~\ref{prop:noisy_data_manifold}. More precisely, for a given $\varepsilon$, the shell $\mathcal{A}_t^\varepsilon$ is defined as
\begin{equation}
    \mathcal{A}_t^\varepsilon \defeq \bb{ \bm{x} \colon r_-(t) \leq d(\boldsymbol{x}, \mathcal{M}_0) \leq r_+(t)},
    \label{eq:A_t_varepsilon}
\end{equation}
where
\begin{equation}
    r_-(t)=r(t)\sqrt{1-2\sqrt{\varepsilon}},
    \qquad
    r_+(t)=r(t)\sqrt{1+2\sqrt{\varepsilon}+2\varepsilon},
\label{eq:thickness_bounds}
\end{equation}
and $r(t)=\sqrt{(1-\bar{\alpha}_t)(D-d')}$ is the shell radius. Proposition \ref{prop:noisy_data_manifold} shows that noise data $\bm{x}_t$ generated by the VP process concentrates on the radial shell $\mathcal{A}_t^\varepsilon$ with high probability.

Moreover, the radial shell width is
\begin{equation}
    w(t)=r_+(t)-r_-(t)
    =
    \sigma_t\sqrt{D-d'}\Bigl(\sqrt{1+2\sqrt{\varepsilon}+2\varepsilon}-\sqrt{1-2\sqrt{\varepsilon}}\Bigr).
    \label{eq:shell_width}
\end{equation}
~\eqref{eq:shell_width} makes two practically important facts explicit.
First, the shell becomes thicker as $\sigma_t$ increases, so later timesteps are intrinsically harder to disentangle.
Second, the thickness of the shell increases linearly in proportion to the radius, and simply increasing the $D-d'$ is insufficient to avoid the distribution mixing." As shown in the rightmost column of Figure~\ref{fig:toydataset}, the shell thickness grows with the dimension $D-d'$. Therefore, even though high-dimensional data exhibit stronger concentration, the resulting shells still have large radii and non-negligible thickness.

\begin{figure*}[t]
    \centering
    \includegraphics[width=0.99\textwidth, height=0.41\textwidth]{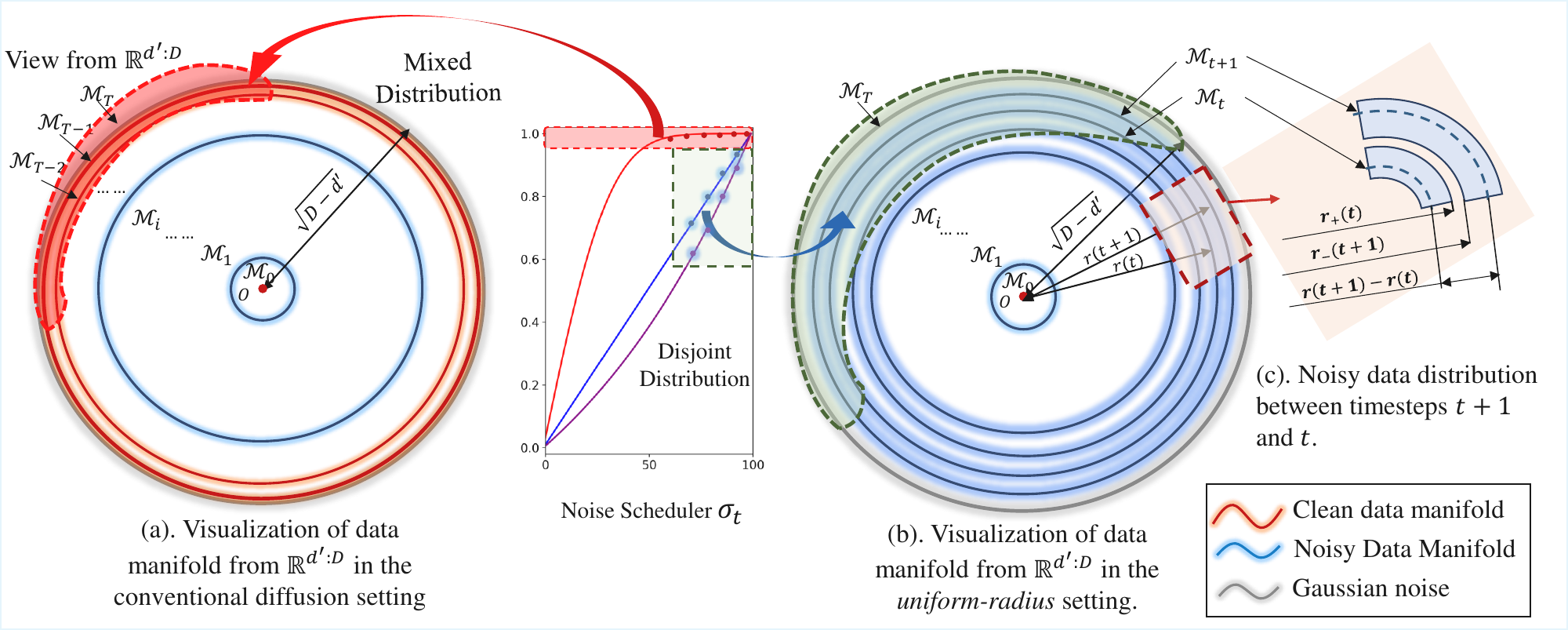}
    \caption{Distribution mixing under geometric view. (a) The conventional VP schedule compresses noisy manifolds near the high-noise end. (b) Uniform radius and late expansion radial schedules maintain more uniform spacing. (c) The thickness of each noisy shell determines whether adjacent timesteps overlap. The blue, red, and purple curves correspond to conventional, uniform-radial, and late-expansion VP noise schedules.}
    \label{fig:method}
    \vspace{-0.2in}
\end{figure*}

\begin{prop}\label{prop:distribution_mixing}
Let $\mathcal{A}_t^\varepsilon$ denote the high-probability shell around $\mathcal{M}_t$ given by ~\eqref{eq:A_t_varepsilon}.
If two adjacent shells overlap, namely $\mathcal{A}_t^\varepsilon \cap \mathcal{A}_{t+1}^\varepsilon \neq \emptyset$, then the corresponding noisy distributions are mixed.
A sufficient condition to avoid adjacent mixing is
\begin{equation}
    r_-(t+1) > r_+(t).
\label{eq:avoid_mixing_radius}
\end{equation}
Equivalently, 
\begin{equation}
    \Delta r(t)\defeq r(t+1) - r(t)>r(t)\Bigl(\sqrt{1+2\sqrt{\varepsilon}+2\varepsilon}-\sqrt{1-2\sqrt{\varepsilon}}\Bigr).
\label{eq:avoid_mixing}
\end{equation}
\end{prop}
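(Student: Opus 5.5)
The plan is to treat the two assertions separately. The mixing claim is essentially definitional and follows from Proposition~\ref{prop:noisy_data_manifold}. The non-overlap condition is an elementary comparison of radial intervals, because each shell $\mathcal{A}_t^\varepsilon$ depends on $\boldsymbol{x}$ only through the scalar $\rho(\boldsymbol{x}) := d(\boldsymbol{x},\mathcal{M}_0)=\|\boldsymbol{x}^{d'+1:D}\|_2$.

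\textbf{Mixing claim.} Write $\mathcal{A}_t^\varepsilon=\rho^{-1}([r_-(t),r_+(t)])$, so overlap of shells is the same as overlap of the two intervals $I_t=[r_-(t),r_+(t)]$ and $I_{t+1}$.
\begin{itemize}
\item By Proposition~\ref{prop:noisy_data_manifold}, each of $p(\cdot\mid t)$ and $p(\cdot\mid t+1)$ puts mass at least $1-2e^{-2(D-d')\varepsilon}$ on its own shell.
\item Under the VP process the law of $\rho(\boldsymbol{x}_t)$ is a scaled $\chi_{D-d'}$ distribution, since $\rho(\boldsymbol{x}_t)=\sigma_t\|\boldsymbol{z}_\perp\|_2$. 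Its density is strictly positive on $(0,\infty)$.
\item If $I_t\cap I_{t+1}$ has nonempty interior, then $\mathcal{A}_t^\varepsilon\cap\mathcal{A}_{t+1}^\varepsilon$ has positive Lebesgue measure and both conditional laws charge it. On that set, Bayes' rule with the uniform prior on $\tau$ gives $p(t\mid\boldsymbol{x})\in(0,1)$ strictly.
\end{itemize}
So a region that is "typical" for both timesteps carries both denoising targets, which is what we mean by the distributions being mixed. The degenerate case $r_+(t)=r_-(t+1)$ is an intersection of Lebesgue measure zero. I would treat it separately, or remove it by working with the open shells.

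\textbf{Sufficient condition.} Since $\bar\alpha_t$ is decreasing, $\sigma_t$ and hence $r(t)$ are increasing in $t$, so $r_-(t)<r_-(t+1)$. The two intervals are therefore disjoint exactly when the upper end of $I_t$ lies below the lower end of $I_{t+1}$, which is \eqref{eq:avoid_mixing_radius}. For the reformulation, set $a=\sqrt{1-2\sqrt{\varepsilon}}$ and $b=\sqrt{1+2\sqrt{\varepsilon}+2\varepsilon}$ (assuming $\varepsilon<1/4$ so that $a>0$). Then
\begin{equation}
r_-(t+1)>r_+(t)\iff a\,r(t+1)>b\,r(t)\iff \Delta r(t)>r(t)\,\frac{b-a}{a}.
\end{equation}

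\textbf{Main obstacle: reconciling this with \eqref{eq:avoid_mixing}.} The exact equivalent of \eqref{eq:avoid_mixing_radius} has the threshold $r(t)(b-a)/a$, whereas the statement writes $r(t)(b-a)$. These agree only to leading order as $\varepsilon\to 0$, where $a\to 1$.
\begin{itemize}
\item Because $a<1$, the exact condition implies \eqref{eq:avoid_mixing}, but not conversely.
\item I would therefore prove the exact form and present \eqref{eq:avoid_mixing} as its first-order version in $\sqrt{\varepsilon}$, with relative correction $O(\sqrt{\varepsilon})$.
\item Alternatively, I would restate \eqref{eq:avoid_mixing} with the factor $1/a$ so that "equivalently" is literally true.
\end{itemize}
Either form preserves the qualitative conclusion that the required spacing $\Delta r(t)$ scales linearly with $r(t)$, and hence becomes more demanding at high noise.
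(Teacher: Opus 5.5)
Your proof is correct and is the same direct radial-interval comparison the paper relies on (the paper gives no written argument beyond pointing to Figure~\ref{fig:method}). Your use of the monotonicity of $r(t)$ even shows that $r_-(t+1)>r_+(t)$ is equivalent to, not merely sufficient for, disjointness of the two shells.

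You are also right that the second display is not equivalent to the first. The exact threshold carries the extra factor $1/\sqrt{1-2\sqrt{\varepsilon}}$. For example, with $\varepsilon=0.01$ and $r(t+1)=1.22\,r(t)$, \eqref{eq:avoid_mixing} holds, yet $r_-(t+1)\approx 1.091\,r(t)<r_+(t)\approx 1.105\,r(t)$, so the shells overlap. Your corrected form is exactly the ratio $\rho_\varepsilon$ that the paper itself uses in the corollary immediately after this proposition. So proving the exact form and presenting \eqref{eq:avoid_mixing} as a strictly weaker first-order relaxation is the right treatment.
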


Proposition~\ref{prop:distribution_mixing} gives the basic non-overlap criterion in terms of shell radii, and the proof is direct, as shown in Figure \ref{fig:method}.

For discrete schedules, it is useful to restate it directly in terms of adjacent noise amplitudes. Proposition~\ref{prop:distribution_mixing} directly implies the following corollary.

\begin{cor}
Fix $\varepsilon>0$ and define
\begin{equation}
    \rho_{\varepsilon}
    :=
    \sqrt{\frac{1+2\sqrt{\varepsilon}+2\varepsilon}{1-2\sqrt{\varepsilon}}}.
\end{equation}
If adjacent timesteps satisfy
\begin{equation}
    \sigma_{t+1} > \rho_{\varepsilon}\,\sigma_t,
    \label{eq:sigma_ratio_condition}
\end{equation}
then the corresponding shell regions in $\mathbb{R}^{D-d'}$ do not intersect.
Equivalently,
\begin{equation}
    \Delta \sigma_t := \sigma_{t+1}-\sigma_t > (\rho_{\varepsilon}-1)\sigma_t.
    \label{eq:sigma_gap_condition}
\end{equation}
\end{cor}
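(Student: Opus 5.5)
We reduce the corollary to the radial non-overlap criterion \eqref{eq:avoid_mixing_radius} of Proposition~\ref{prop:distribution_mixing}. By \eqref{eq:thickness_bounds} we have $r_-(t+1)=\sigma_{t+1}\sqrt{D-d'}\sqrt{1-2\sqrt{\varepsilon}}$ and $r_+(t)=\sigma_t\sqrt{D-d'}\sqrt{1+2\sqrt{\varepsilon}+2\varepsilon}$. We implicitly assume $\varepsilon<1/4$, so that $1-2\sqrt{\varepsilon}>0$ and $\rho_\varepsilon$ is a well-defined real number greater than $1$.

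\textbf{Step 1 (the ratio condition).} We show that \eqref{eq:sigma_ratio_condition} is equivalent to $r_-(t+1)>r_+(t)$. Divide both sides of the latter by the positive quantity $\sqrt{D-d'}\sqrt{1-2\sqrt{\varepsilon}}$. This gives $\sigma_{t+1}>\rho_\varepsilon\sigma_t$ directly, so \eqref{eq:sigma_ratio_condition} is exactly condition \eqref{eq:avoid_mixing_radius}.

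\textbf{Step 2 (disjointness of the shells).} Under this condition we show $\mathcal{A}_t^\varepsilon\cap\mathcal{A}_{t+1}^\varepsilon=\emptyset$ by contradiction. Suppose $\bm{x}$ lies in both shells. By the definition \eqref{eq:A_t_varepsilon}, we get
\begin{equation*}
d(\bm{x},\mathcal{M}_0)\le r_+(t)<r_-(t+1)\le d(\bm{x},\mathcal{M}_0),
\end{equation*}
which is impossible. Since $d(\bm{x},\mathcal{M}_0)=\|\bm{x}^{d'+1:D}\|_2$ depends only on the normal coordinates, the shells are cylinders over spherical annuli in $\mathbb{R}^{D-d'}$. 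Hence the statement ``do not intersect in $\mathbb{R}^{D-d'}$'' is the same disjointness, read off in the normal space.

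\textbf{Step 3 (the gap form).} Subtract $\sigma_t$ from both sides of $\sigma_{t+1}>\rho_\varepsilon\sigma_t$. This gives $\Delta\sigma_t>(\rho_\varepsilon-1)\sigma_t$, which is \eqref{eq:sigma_gap_condition}. The two forms are trivially equivalent.

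\textbf{Main obstacle.} There is no real analytic difficulty here. The one point needing care is the implicit restriction $\varepsilon<1/4$, which guarantees $\rho_\varepsilon$ is real and the division in Step 1 is legitimate.

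A second subtlety is that the paper's form \eqref{eq:avoid_mixing} is not literally equivalent to \eqref{eq:avoid_mixing_radius}, since it uses the width with $r(t)$ rather than $r(t+1)$. We will therefore avoid routing through \eqref{eq:avoid_mixing}. Instead we derive the corollary directly from \eqref{eq:avoid_mixing_radius} and the definition of the shells, as in Steps 1 and 2.
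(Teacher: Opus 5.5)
Your proof is correct and follows the paper's intended route. The paper gives no separate argument beyond saying the corollary follows directly from Proposition~\ref{prop:distribution_mixing}, and your steps (substitute $r_\pm$ into the radial criterion \eqref{eq:avoid_mixing_radius}, divide by $\sqrt{D-d'}\sqrt{1-2\sqrt{\varepsilon}}$, subtract $\sigma_t$) are exactly that derivation.

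Both of your caveats are sound. The restriction $0<\varepsilon<1/4$ is needed for $\rho_\varepsilon$ and $r_-$ to be real. Also, since $\rho_\varepsilon>1+\kappa_\varepsilon$, the paper's ``equivalent'' form \eqref{eq:avoid_mixing} is strictly weaker than \eqref{eq:avoid_mixing_radius}, so deriving the corollary from \eqref{eq:avoid_mixing_radius} is the right choice.
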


~\eqref{eq:sigma_gap_condition} gives the practical answer to the question “how far apart must $\sigma_t$ and $\sigma_{t+1}$ be?”
The required separation is not constant.
It scales linearly with the current noise amplitude $\sigma_t$.
Therefore, as $t$ increases and the shell gets thicker, adjacent timesteps must be spaced larger to avoid overlap.
This reveals precisely that the late-stage mixing is the dominant failure mode in time-unconditional DDIM under a conventional noise scheduler, that $\sigma_t$ is an increasing but $\frac{d\sigma_t}{dt}$ gradually decreases over time. Consequently, $\sigma_{t+1}-\sigma_t$ decrease with $t$ and approaches $0$ in the late stage. This directly contradicts ~\eqref{eq:sigma_gap_condition} for any $\epsilon>0$, which requires the separation between adjacent timesteps to grow proportionally with $\sigma_t$. Hence, as demonstrated in Figure \ref{fig:method}(a), the conventional schedule fails where disentanglement is most needed: when the shell becomes thicker, adjacent shells are placed increasingly closer in radius, inevitably causing late-stage manifold overlap and timestep mixing.

If ~\eqref{eq:sigma_gap_condition} fails, then there is a region where the noisy data distribution of two different timesteps both have non-negligible probability mass.
In that region, a noise predictor trained without $t$ cannot simultaneously represent the correct DDPM/DDIM targets for both timesteps.
The target necessarily collapses to a posterior average over timesteps, which is not the score field required by the reverse dynamics \cite{chen2023score}. By contrast, if adjacent shells are disjoint, then $p(t\mid \boldsymbol{x})$ is effectively concentrated, and the noisy sample itself identifies the timestep.
Time conditioning becomes redundant because manifold disentanglement has already encoded time into geometry.

\begin{figure*}[t]
    \centering
    \includegraphics[width=0.92\textwidth, height=0.82\textwidth]{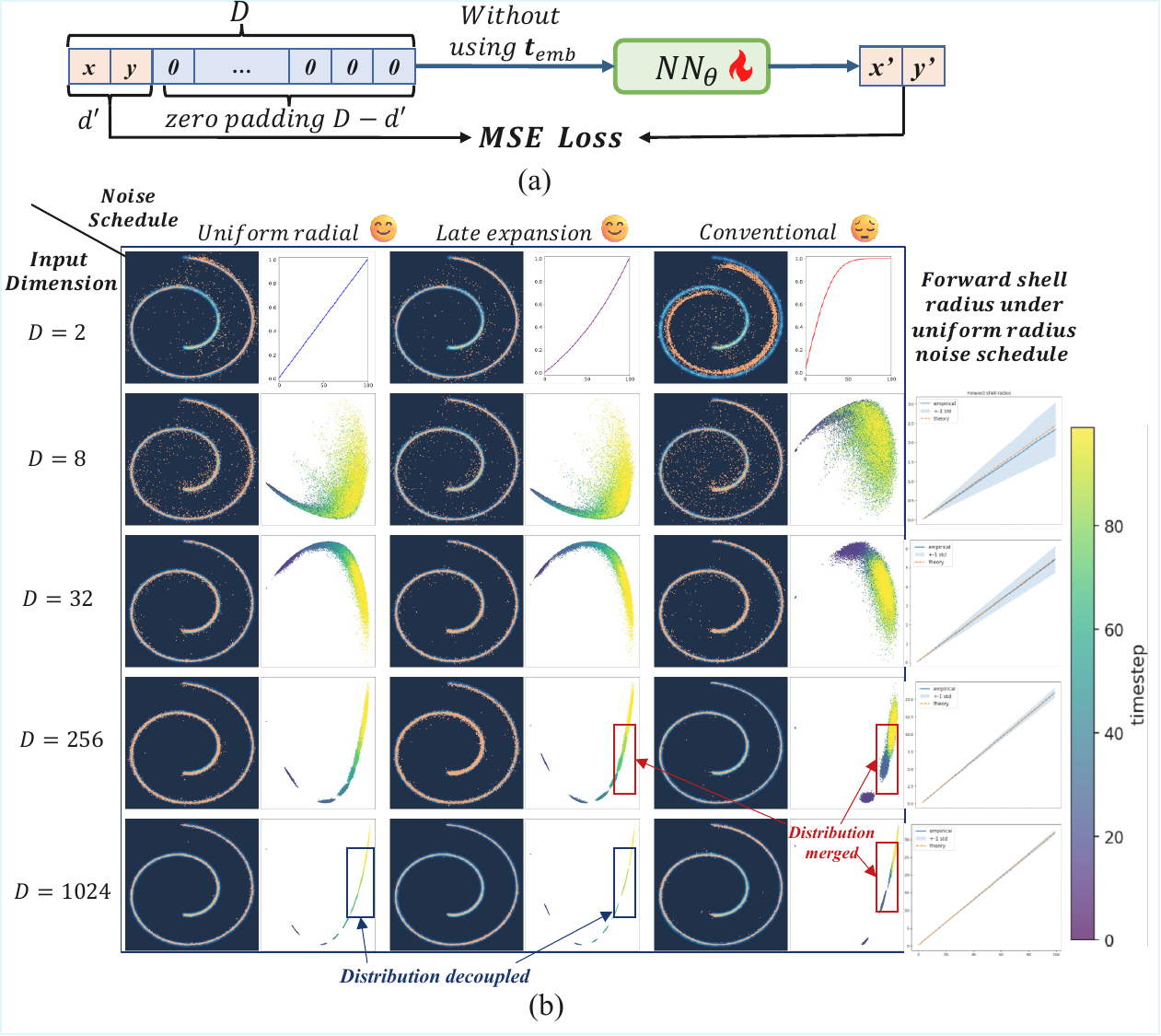}
    \caption{Swiss-roll toy experiments without timestep embedding. Rows vary the ambient dimension $D$, schedule blocks compare Uniform-radial VP, Late-expansion VP, and Conventional VP, and the rightmost column shows the corresponding shell-radius curves.}
    \label{fig:toydataset}
    \vspace{-0.1in}
\end{figure*}

\subsection{Manifold disjoint via noise schedule }

\begin{figure*}[t]
    \centering
    \includegraphics[width=0.84\textwidth, height=0.58\textwidth]{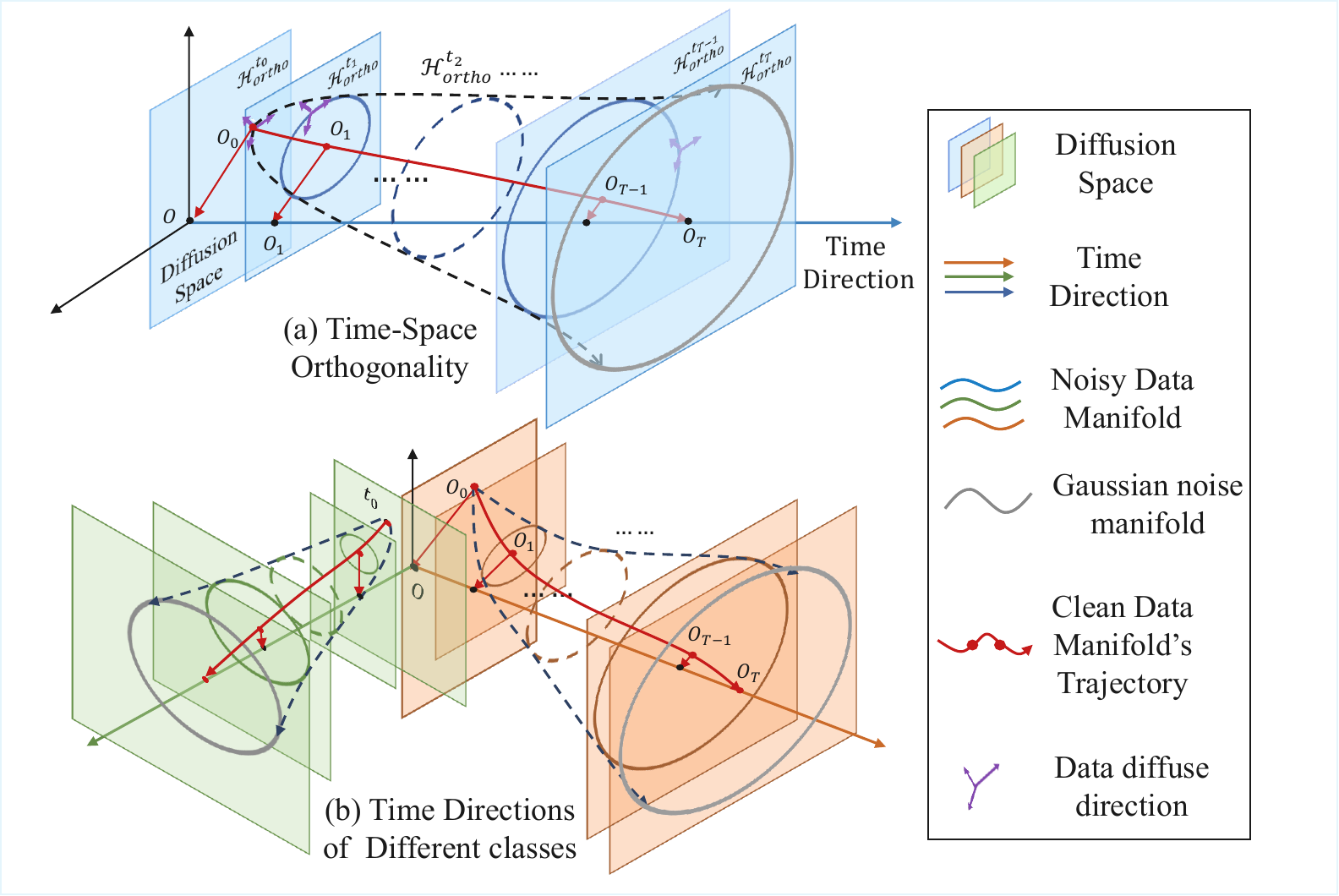}
    \caption{Orthogonal time-space disentanglement. (a) Time is encoded as a geometric direction $\boldsymbol{t}_{ortho}$, while diffusion takes place in the orthogonal subspace. (b) Assigning different time directions to different classes enables class-conditional generation without class or time embeddings.}
    \label{fig:ortho}
    \vskip -0.15in
\end{figure*}

The first way to realize time-unconditional diffusion is to redesign the forward path so that adjacent noisy shells remain separated for as long as possible.
Instead of prescribing $\beta_t$ first, we control the normal-space radius through $\sigma_t$ directly and then set $c_t=\sqrt{1-\sigma_t^2}$ in the unified forward form.
We compare four representative schedules:
\begin{equation}
    \begin{alignedat}{2}
        \text{Uniform-radial VP:}\quad \sigma_t &= t, &\qquad
        \text{Late-expansion VP:}\quad \sigma_t &= \frac{e^t-1}{e-1}, \quad t\in[0,1].
    \end{alignedat}
\end{equation}

For reference, we also consider the conventional VP schedule used in DDPM/DDIM and the optimal transport(OT) geodesic path used in flow matching.
The resulting parameters are summarized in Table~\ref{tab:schedule}.

\begin{table}[htbp]
\centering
\caption{Noise Parameters for Different Noise Scheduling Strategies}
\renewcommand{\arraystretch}{1.} 
\begin{tabular}{lcccc}
\toprule
\textbf{Noise Schedule} & $\beta_t$ & $\alpha_t$ & $c_t$ & $\sigma_t$ \\ \midrule
Conventional VP & $kt + b_0$ & $1 - kt - b_0$ & $\sqrt{\prod_{s=1}^{t} (1 - ks - b_0)}$ & $\sqrt{1 - \prod_{s=1}^{t} (1 - ks - b_0)}$ \\ 
Uniform-radial VP & $\frac{2t\Delta t - (\Delta t)^2}{1 - (t - \Delta t)^2}$ & $\frac{1 - t^2}{1 - (t - \Delta t)^2}$ & $\sqrt{1 - t^2}$ & $t$ \\ 
Late-expansion VP & derived from $\sigma_t$ & derived from $\sigma_t$ & $\sqrt{1 - \left(\frac{e^t-1}{e-1}\right)^2}$ & $\frac{e^t-1}{e-1}$ \\ 
OT geodesic path & - & - & $1 - t$ & $t$ \\ \bottomrule
\label{tab:schedule}
\end{tabular}
\end{table}

These schedules admit a clean geometric interpretation:
\begin{itemize}
    \item \textbf{Conventional VP schedule.} This is the usual DDPM/DDIM parameterization induced by a linear or cosine $\beta_t$ schedule. Here $\sigma_t=\sqrt{1-\bar{\alpha}_t}$ saturates near the terminal stage, so $\Delta\sigma_t$ shrinks exactly where the shell width is largest. This is the schedule most prone to manifold overlap.
    \item \textbf{Uniform-radial VP schedule.} This schedule makes the noisy-manifold radius grow linearly with time. It removes the terminal compression of the conventional VP schedule and substantially reduces overlap. Geometrically, it is the DDIM analogue that best matches the constant radial growth of OT-style paths.
    \item \textbf{Late-expansion VP schedule.} This schedule increases the radial gap more aggressively at large $t$. Since shell thickness grows with $\sigma_t$, this is the schedule designed specifically to combat the late-stage overlap predicted by ~\eqref{eq:sigma_gap_condition}.
    \item \textbf{OT geodesic path.} This is the flow-matching path~\cite{lipman2022flow,chewi2023log}. In the normal subspace $\mathbb{R}^{d':D}$ that is orthogonal to $\mathcal{M}_0$, it induces a uniform radial law, which is why it naturally exhibits much weaker manifold mixing.
\end{itemize}

\begin{prop}\label{prop:exp_schedule}
Let
\begin{equation}
    \sigma_t = \frac{e^{t}-1}{e-1},
    \qquad
    \kappa_{\varepsilon} := \sqrt{1+2\sqrt{\varepsilon}+2\varepsilon} - \sqrt{1-2\sqrt{\varepsilon}}.
\label{eq:exp_alpha}
\end{equation}
If $e^{\Delta t}-1 > \kappa_{\varepsilon}$, then the Late-expansion VP schedule satisfies ~\eqref{eq:avoid_mixing} for every adjacent pair of timesteps.
\end{prop}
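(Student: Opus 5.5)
The plan is to reduce \eqref{eq:avoid_mixing} to a scalar inequality on $\sigma_t$ and then use the explicit exponential form. Since $r(t)=\sigma_t\sqrt{D-d'}$ from Proposition~\ref{prop:noisy_data_manifold}, the common factor $\sqrt{D-d'}>0$ cancels on both sides of \eqref{eq:avoid_mixing}. With the discrete step $\Delta t$ (so ``$t+1$'' means $t+\Delta t$), it therefore suffices to show
\begin{equation*}
\sigma_{t+\Delta t}-\sigma_t > \kappa_{\varepsilon}\,\sigma_t \qquad \text{for every grid point } t\in[0,1-\Delta t].
\end{equation*}

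Next I substitute $\sigma_t=(e^t-1)/(e-1)$. The left side becomes
\begin{equation*}
\frac{e^{t+\Delta t}-e^{t}}{e-1}=\frac{e^{t}(e^{\Delta t}-1)}{e-1},
\end{equation*}
and the right side becomes $\kappa_\varepsilon (e^t-1)/(e-1)$. Multiplying through by $e-1>0$, the target is
\begin{equation*}
e^{t}(e^{\Delta t}-1) > \kappa_\varepsilon (e^{t}-1).
\end{equation*}

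The key step is a chain of elementary bounds. By hypothesis $e^{\Delta t}-1>\kappa_\varepsilon$, and $e^t>0$, so
\begin{equation*}
e^{t}(e^{\Delta t}-1) > e^{t}\kappa_\varepsilon.
\end{equation*}
For $0<\varepsilon<1/4$ we have $\kappa_\varepsilon>0$, since the first square root exceeds $1$ and the second is below $1$. Combined with $e^t>e^t-1$, this gives $e^t\kappa_\varepsilon\ge(e^t-1)\kappa_\varepsilon$. Chaining the two inequalities proves the target for every $t$, including $t=0$, where $\sigma_0=0$ and the claim reduces to $\sigma_{\Delta t}>0$. This shows the condition holds uniformly over all adjacent pairs, with a margin that actually grows like $e^t\kappa_\varepsilon$ compared with $(e^t-1)\kappa_\varepsilon$.

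I do not expect a genuine analytic obstacle; the argument is a direct substitution. The points needing care are bookkeeping. First, one must interpret the discrete index shift $t\to t+1$ as $t\to t+\Delta t$ on $[0,1]$. Second, one must restrict to the range $\varepsilon<1/4$, where $r_-(t)$ is well-defined and $\kappa_\varepsilon>0$. Third, the statement asks only for \eqref{eq:avoid_mixing} in the form $\Delta r(t)>\kappa_\varepsilon r(t)$, and that is exactly what is verified here. If one instead wanted the stronger strict non-overlap $r_-(t+\Delta t)>r_+(t)$, the same computation with the ratio $\rho_\varepsilon$ would require $e^{\Delta t}>\rho_\varepsilon$. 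I would remark on this rather than prove it, since it is not what the proposition claims.
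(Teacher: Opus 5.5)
Your proof is correct and follows essentially the same route as the paper. The paper writes $\Delta\sigma_t=\bigl(\sigma_t+\tfrac{1}{e-1}\bigr)(e^{\Delta t}-1)$, which is your $e^t(e^{\Delta t}-1)/(e-1)$, bounds it below by $\kappa_\varepsilon\sigma_t$ as you do, and then restores the factor $\sqrt{D-d'}$. Your explicit check that $\kappa_\varepsilon>0$ fills a step the paper leaves implicit, and your closing caveat is well taken: since $\rho_\varepsilon>1+\kappa_\varepsilon$, \eqref{eq:avoid_mixing} is strictly weaker than \eqref{eq:avoid_mixing_radius}, so this proposition alone does not give disjoint shells.
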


The proposition formalizes the intuition above: once shell thickness grows with $t$, a schedule with constant radial increment can become marginal, whereas a late-expanding radial law can maintain separation all the way to the terminal stage.
Importantly, none of these schedule-level changes modify the denoiser architecture.
They only reorganize the geometry of noisy data, which is exactly where the time-conditioning problem comes from. Figure~\ref{fig:toydataset} provides an empirical illustration on Swiss-roll toy data without timestep input. Across rows, increasing the ambient dimension $D$ makes the shell structure more pronounced and helps the generated samples adhere more closely to the low-dimensional spiral support. This is consistent with the shell-concentration picture above, where the geometry of noisy samples becomes easier to organize around radius-controlled manifolds in higher-dimensional normal spaces.

\subsection{Disentangle time space via orthogonal time-space decomposition}

Schedule redesign improves separation, but it still depends on the particular choice of $\sigma_t$.
Our second solution is more structural: instead of asking the schedule alone to keep noisy shells apart, we directly separate the time direction from the diffusion direction in the full data space \cite{karczewski2026the}.
This allows us to preserve the original variance-preserving schedule while enforcing manifold disentanglement geometrically. Figure~\ref{fig:toydataset} also supports this motivation. Across columns, Uniform-radial VP and Late-expansion VP keep neighboring shells more separated than the Conventional VP schedule, especially near the high-noise regime where overlap is predicted by \eqref{eq:sigma_gap_condition}. This shows the value of schedule design, but it also highlights its limitations. If radial spacing inside one diffusion space is still insufficient, we can separate timesteps through a new geometric direction.

Under Assumption~\ref{assum1}, we choose a unit direction $\boldsymbol{t}_{ortho}$ orthogonal to $\mathcal{M}_0$ and set $\boldsymbol{v}=\delta\,\boldsymbol{t}_{ortho}$ with $\delta>0$. We then define the base diffusion hyperplane
\begin{equation}
    \mathcal{H}_{ortho}
    :=
    \{\boldsymbol{x}\in\mathbb{R}^D:\langle \boldsymbol{x}, \boldsymbol{t}_{ortho} \rangle = 0\},
\end{equation}
where $\langle \cdot, \cdot \rangle$ denotes the inner product. By construction, every clean sample satisfies $\langle \boldsymbol{x}_0, \boldsymbol{t}_{ortho} \rangle=0$, so
\begin{equation}
    \mathcal{M}_0 \subset \mathcal{H}_{ortho}.
\end{equation}
Under the linear-manifold assumption, this inclusion is exact: the clean manifold lies entirely inside the base diffusion hyperplane. That is important because it lets us use $\boldsymbol{t}_{ortho}$ only to separate timesteps, while keeping the clean signal inside the diffusion slice, exactly as illustrated in Figure~\ref{fig:ortho}(a). For each timestep, we define the shifted diffusion hyperplane
\begin{equation}
    \mathcal{H}_{ortho}^{t}
    :=
    \mathcal{H}_{ortho} + t\boldsymbol{v}
    =
    \{\boldsymbol{x}\in\mathbb{R}^D:\langle \boldsymbol{x}, \boldsymbol{t}_{ortho} \rangle = t\delta\}.
    \label{eq:ortho_hyperplane_t}
\end{equation}
Thus $\mathcal{H}_{ortho}^{0}=\mathcal{H}_{ortho}$ contains the clean manifold, and each later $\mathcal{H}_{ortho}^{t}$ is a parallel copy shifted along the time direction. The family $\{\mathcal{H}_{ortho}^{t}\}$ is the sequence of blue slices shown in Figure~\ref{fig:ortho}(a). Given Gaussian noise $\boldsymbol{z}\sim\mathcal{N}(\boldsymbol{0},\boldsymbol{I})$, we project it onto $\mathcal{H}_{ortho}$
\begin{equation}
    \boldsymbol{z}_{ortho}
    =
    \boldsymbol{z}
    -
    \langle \boldsymbol{z}, \boldsymbol{t}_{ortho} \rangle \boldsymbol{t}_{ortho}.
\end{equation}
With this decomposition, the modified VP forward process becomes
\begin{equation}
    \boldsymbol{x}_t = c_t\boldsymbol{x}_0 + t\boldsymbol{v} + \sigma_t\boldsymbol{z}_{ortho},
    \label{eq:forward_ortho}
\end{equation}
where $\boldsymbol{x}_0\in\mathcal{M}_0\subset\mathcal{H}_{ortho}$ and $\boldsymbol{z}_{ortho}\in\mathcal{H}_{ortho}$. Therefore,
\begin{equation}
    \langle \boldsymbol{x}_t, \boldsymbol{t}_{ortho} \rangle
    =
    c_t\langle \boldsymbol{x}_0, \boldsymbol{t}_{ortho} \rangle
    +
    t\langle \boldsymbol{v}, \boldsymbol{t}_{ortho} \rangle
    +
    \sigma_t\langle \boldsymbol{z}_{ortho}, \boldsymbol{t}_{ortho} \rangle
    =
    t\delta,
\end{equation}
which implies
\begin{equation}
    \boldsymbol{x}_t \in \mathcal{H}_{ortho}^{t}.
\end{equation}
This relation is the key geometric fact behind the construction. The signal term $c_t\boldsymbol{x}_0$ and the diffusion term $\sigma_t\boldsymbol{z}_{ortho}$ stay inside a single slice, while time acts only by translating that slice through $t\boldsymbol{v}$. In the language of Figure~\ref{fig:ortho}(a), diffusion happens inside each plane, whereas the planes themselves move along the time direction.

Because diffusion is restricted to $\mathcal{H}_{ortho}^{t}$ at timestep $t$, manifolds from different timesteps occupy different parallel slices of the full data space.
Even if the radial geometry within each slice still resembles a VP-style noisy shell, those shells now live in different hyperplanes rather than competing in one common ambient slice.
Hence, the model no longer needs to infer the timestep from a possibly overlapping radial structure alone. The timestep is already encoded by which shifted hyperplane $\mathcal{H}_{ortho}^{t}$ the sample belongs to.

This viewpoint also explains why orthogonal time-space disentanglement is more general than schedule tuning.
Schedule tuning tries to keep shells apart inside the same normal space; orthogonal decomposition instead creates a new direction along which manifolds are separated by construction.
The first approach reduces overlap, while the second enforces disentanglement. Figure~\ref{fig:ortho}(a) makes this distinction visual. The purple arrows indicate diffusion directions inside each slice, while the sequence of planes indexed by $t_0,t_1,\dots,t_T$ records the translation of those slices along the time axis.
Similarly, \cite{brown2023verifying} argues that real-world data resides on a disjoint union of manifolds with varying intrinsic dimensions. This perspective also extends naturally to class-conditional generation.
By assigning each class its own direction $\boldsymbol{t}_{ortho}^{\,i}$, as sketched in Figure~\ref{fig:ortho}(b), we separate both timesteps and classes into disjoint geometric slices, making class-conditioned sampling possible without introducing class embeddings into the denoiser~\cite{ho2022classifier, dhariwal2021diffusion}.

The practical training and sampling procedures are summarized below. The training loss is the same noise-prediction objective used in DDPM, while the sampling update follows the deterministic DDIM rule.

\begin{minipage}{0.48\textwidth}
\begin{algorithm}[H]
\caption{Training model $\boldsymbol{z}_{\boldsymbol{\theta}}$ with time direction $\boldsymbol{t}_{ortho}$}
\label{alg:E}
\begin{algorithmic}[1]
\STATE Choose a unit direction $\boldsymbol{t}_{ortho}$ and set $\boldsymbol{v}=\delta\boldsymbol{t}_{ortho}$
\STATE \textbf{repeat}
\STATE \textbf{Input:} timestep $t$, training data $\boldsymbol{x}_0$
\STATE Sample $\boldsymbol{z} \sim \mathcal{N}(\boldsymbol{0}, \boldsymbol{I})$ and compute $\boldsymbol{z}_{ortho}=\boldsymbol{z}-\langle \boldsymbol{z},\boldsymbol{t}_{ortho}\rangle\boldsymbol{t}_{ortho}$
\STATE $\boldsymbol{x}_t\gets c_t\boldsymbol{x}_0+t\cdot\boldsymbol{v}+\sigma_t\boldsymbol{z}_{ortho}$
\STATE Update model with $\nabla_{\boldsymbol{\theta}}\left\| \boldsymbol{z}_{ortho}-\boldsymbol{z}_{\boldsymbol{\theta}}(\boldsymbol{x}_t) \right\|^2$
\STATE \textbf{until} converged
\end{algorithmic}
\end{algorithm}
\end{minipage}
\hfill
\begin{minipage}{0.48\textwidth}
\begin{algorithm}[H]
\caption{DDIM sampling with time direction $\boldsymbol{t}_{ortho}$}
\label{alg:F}
\begin{algorithmic}[1]
\STATE Sample $\boldsymbol{x}_T \sim \mathcal{N}(\boldsymbol{0}, \boldsymbol{I})$ and project it to $\mathcal{H}_{ortho}$
\STATE $\boldsymbol{x}_T\gets \boldsymbol{x}_T+T\cdot\boldsymbol{v}$
\FOR{$t$ from $T$ to $1$}
\STATE $\hat{\boldsymbol{\epsilon}}_t \gets \boldsymbol{z}_{\boldsymbol{\theta}}(\boldsymbol{x}_t)$
\STATE $\hat{\boldsymbol{x}}_0 \gets \dfrac{\boldsymbol{x}_t-t\cdot\boldsymbol{v}-\sigma_t\hat{\boldsymbol{\epsilon}}_t}{c_t}$
\STATE $\boldsymbol{x}_{t-1} \gets c_{t-1}\hat{\boldsymbol{x}}_0+(t-1)\cdot\boldsymbol{v}+\sigma_{t-1}\hat{\boldsymbol{\epsilon}}_t$
\ENDFOR
\STATE \textbf{return} $\boldsymbol{x}_0$
\end{algorithmic}
\end{algorithm}
\end{minipage}

\section{Results}
\label{sec:results}

\begin{figure*}[t]
    \centering
    \includegraphics[width=0.96\textwidth, height=0.42\textwidth]{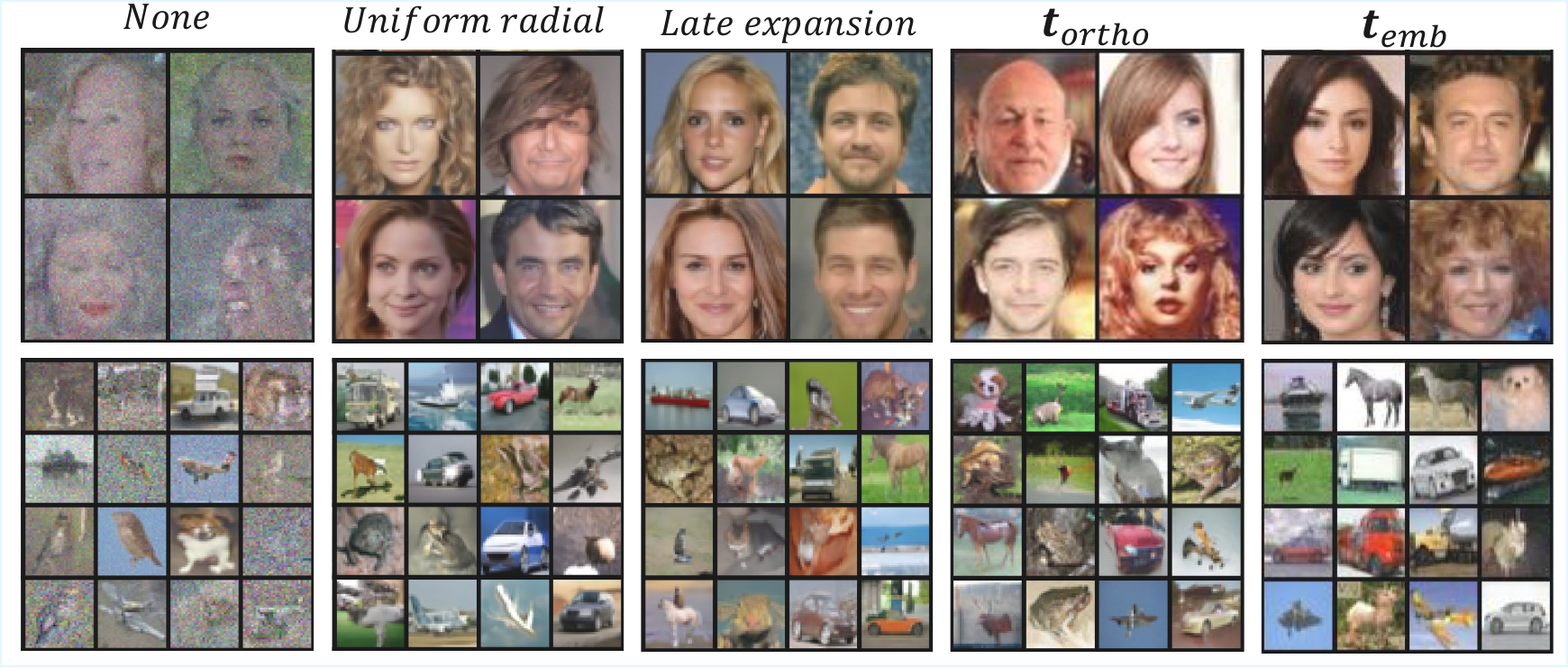}
    \caption{Qualitative DDIM comparison on the small-image benchmarks. Columns from left to right correspond to None, Uniform-radial, Late-expansion, $\boldsymbol{t}_{ortho}$, and $t_{emb}$. The top row shows CelebA samples, and the bottom row shows class-conditional CIFAR10 samples. The None baseline exhibits saturation, under-saturation, and residual noise, whereas all geometry-based decoupling strategies restore recognizable samples and the orthogonal construction is visually closest to the explicit time-conditioned baseline.}
    \label{fig:celeba_conditioning}
    \vskip -0.2in
\end{figure*}

We validate the geometric analysis by training diffusion models from scratch on small datasets: CIFAR10 \cite{Krizhevsky2009LearningML} and CelebA \cite{liu2015deep}, and by evaluating large datasets-ImageNet \cite{deng2009imagenet} generation.
Across all experiments, the main question is whether improved geometric separation of noisy manifolds can replace explicit time conditioning.
Unless otherwise noted, all quantitative results and all visual samples in this section are obtained with DDIM sampling~\cite{song2020denoising}.
This choice is deliberate: compared with stochastic samplers, DDIM is more sensitive to local geometric ambiguity between neighboring noisy manifolds, so it provides a particularly sharp test of the claims in Section~\ref{sec:time_disentangled_diffusion}~\cite{chung2022improving, he2023manifold, yao2025manifold, zhan2026understanding}.
If adjacent manifolds remain mixed, the deterministic reverse trajectory quickly accumulates errors; if they are separated, the sample geometry itself can reveal the current manifold and make explicit time tokens unnecessary.

\subsection{Rescheduling alleviates time-free DDIM failure}

We first test the schedule-level claim from Section~\ref{sec:time_disentangled_diffusion} that if DDIM fails because neighboring noisy manifolds overlap under the standard VP trajectory used by the None setting, then reducing this overlap should improve time-unconditional generation even when the denoiser never receives an explicit time embedding. This experiment also revisits, from a geometric viewpoint, the time-unconditional DDIM failure reported by \cite{sun2025noise}. This is exactly what we observe.
Under the None setting, DDIM collapses on all three small-image benchmarks, producing the oversaturated or noisy outputs shown in Figure~\ref{fig:celeba_conditioning}.
Quantitatively, the FID rises to $71.25$ on class-conditional CIFAR10, $66.03$ on CIFAR10-unconditional, and $120.71$ on CelebA, far from the explicit time-embedding baseline in Table~\ref{tab:time_conditioning_results}.

\begin{wrapfigure}{r}{0.5\linewidth}
    \centering  
    \vspace{-0.1in}
    \includegraphics[width=0.9\linewidth, height=0.85\linewidth]{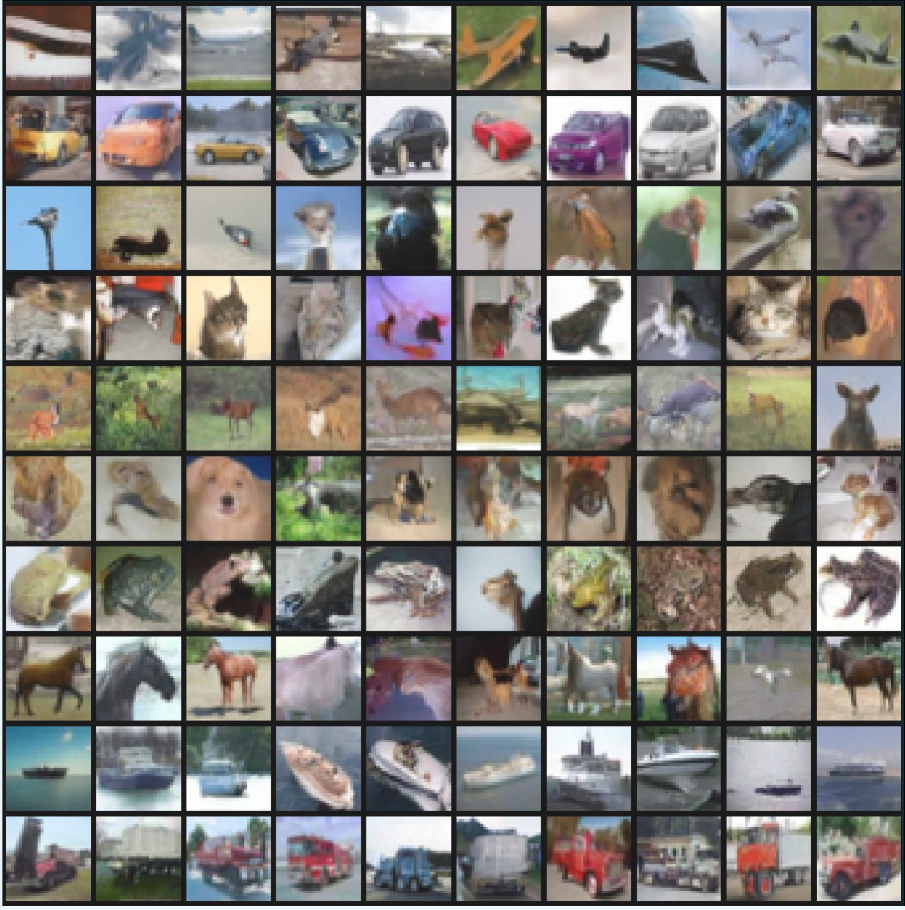}  
    \caption{Qualitative class-conditional CIFAR10 results under $\boldsymbol{t}_{emb}$  in Figure \ref{fig:ortho}. Each row corresponds to one CIFAR10 class shift direction, providing a visualization of the class-specific time-space disentanglement.}
    \label{fig:merge_class}
    \vspace{-0.2in}
\end{wrapfigure}

Once the VP forward process in \eqref{eq:forward_process} is rescheduled to enlarge the neighborhood manifold spacing, the performance of the time-unconditional model improves immediately.
As shown in Table \ref{tab:time_conditioning_results}, the uniform-radial scheduler lowers the FID to $7.59$, $10.59$, and $15.43$ on the same three datasets, while the late-expansion scheduler yields $7.46$, $13.66$, and $18.92$.
These gains should not be explained by architectural effects, since the denoiser itself is unchanged; the only modification is the geometry of the noisy trajectory.
This directly supports the Proposition~\ref{prop:distribution_mixing} that DDIM is not primarily failing because the model is too weak, but because the standard VP process \ref{eq:forward_process} setting compresses neighboring manifolds exactly where shell thickness is largest, making the reverse target ambiguous without providing explicit time information in the time-unconditional setting.

\begin{figure*}[t]
    \centering
    \includegraphics[width=0.92\textwidth, height=0.4\textwidth]{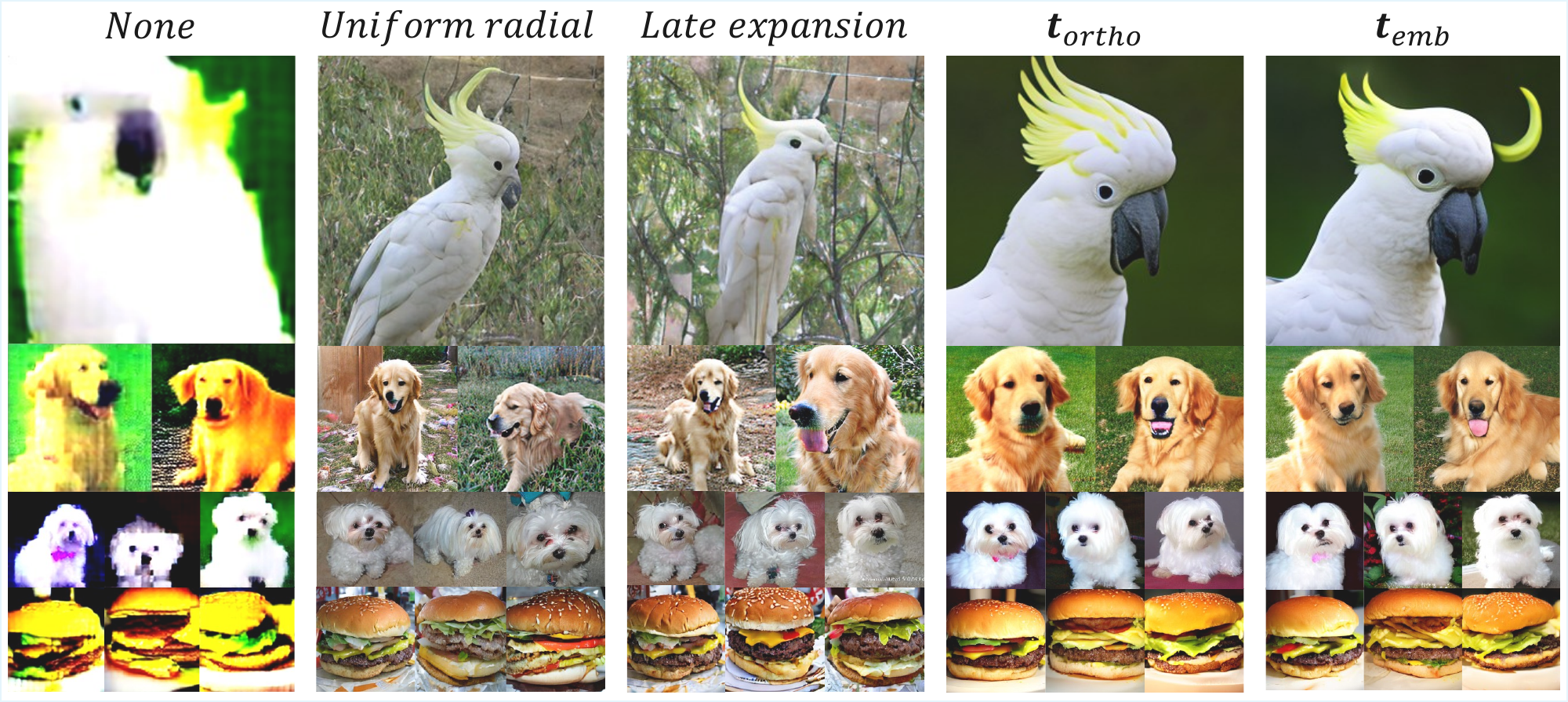}
    \caption{Qualitative ImageNet result with DiT-B/2~\cite{peebles2023scalable} at $256 \times 256$, DDIM sampling~\cite{song2020denoising}, and classifier-free guidance scale $5.5$~\cite{ho2022classifier, dhariwal2021diffusion}. Columns from left to right are None, Uniform-radial, Late-expansion, $\boldsymbol{t}_{ortho}$, and $t_{emb}$. Each row shows representative ImageNet classes. The None baseline collapses, while schedule-level manifold disjoint and orthogonal time-space separation recover clean samples without explicitly feeding the timestep into the denoiser.}
    \label{fig:ImageNet_comparison}
    \vskip -0.1in
\end{figure*}

The same conclusion becomes even sharper in the ImageNet with diffusion-transformer experiments.
At fixed DiT-B/2 capacity~\cite{peebles2023scalable} and classifier-free guidance scale $5.5$~\cite{ho2022classifier, dhariwal2021diffusion}, the \textit{None} baseline still fails badly, with high FID $35.73$, sFID $35.47$ and extremely low Inception Score $90.97$ \cite{salimans2016improved}.
However, the two schedule-only time-unconditional variants again recover strong performance: Uniform-radial VP reaches FID $9.33$ and sFID $9.87$, and Late-expansion VP reaches FID $12.46$ and sFID $13.41$.
In fact, under this training and sampling budget, Uniform-radial VP even surpasses the explicit time-embedding baseline on FID and sFID, while remaining competitive in precision and Inception Score.
This result is important for the overall thesis of the paper as it shows that once a sufficiently expressive diffusion transformer is paired with a geometry that separates neighboring noisy manifolds, explicit timestep embeddings are no longer strictly necessary for high-quality DDIM sampling.
\begin{table}[t]
  \caption{Quantitative evaluation on the Celeba dataset (unconditional) and Cifar10 dataset (conditional) of the generated image using a different time conditioning strategy.}
  \label{tab:time_conditioning_results}
  \centering
  \resizebox{\columnwidth}{!}{
  \begin{tabular}{lccccccccc}
    \toprule
    \multirow{2}{*}{\textbf{Time Conditioning}} 
      & \multicolumn{3}{c}{\textbf{CIFAR10}} 
      & \multicolumn{3}{c}{\textbf{CIFAR-unconditional}}
      & \multicolumn{3}{c}{\textbf{Celeba}} \\
    \cmidrule(lr){2-4} \cmidrule(lr){5-7} \cmidrule(lr){8-10}
      & FID$\downarrow$ 
      & Precision$\uparrow$ & Recall$\uparrow$
      & FID$\downarrow$
      & Precision$\uparrow$ & Recall$\uparrow$
      & FID$\downarrow$ 
      & Precision$\uparrow$ & Recall$\uparrow$ \\
    \midrule
    
    time embedding        & 4.58 & 0.729    & 0.522    & 5.04 & 0.728    & 0.521    & 9.99    & 0.707    & 0.414 \\  
    
    \rowcolor{lightgreen}
    Uniform-radial VP        & 7.59   & 0.674 & 0.495 & 10.59   & 0.682 & 0.478 & 15.43 & 0.736 & 0.271          \\

    \rowcolor{lightgreen}
    Late-expansion VP           & 7.46   & 0.689 & 0.462  & 13.66   & 0.689 &  0.450 & 18.92 & 0.77 & 0.195          \\

    \rowcolor{lightgreen}
    $\boldsymbol{t}_{ortho}$  &  6.57  & 0.713  & 0.507 & 7.74  & 0.730  & 0.476 & 11.38 & 0.676  & 0.483          \\

    \rowcolor{pink}
    None & 71.25      & 0.4115    & 0.555     & 66.03      & 0.424    & 0.565     & 120.71 & 0.106  & 0.604          \\
    \bottomrule
  \end{tabular}
  }
\end{table}

Taken together, the small-image and ImageNet results validate the main theoretical prediction of Section~\ref{sec:time_disentangled_diffusion}: in a manifold-sensitive sampler such as DDIM, explicitly feeding $t$ is not the only way to identify the current denoising manifold.
If the forward process is redesigned so that adjacent noisy manifolds are better separated, the model can infer the relevant time information from geometry alone.

\subsection{Orthogonal time-space disentanglement directly validates complete separation}

We next test the stronger construction from Section~\ref{sec:time_disentangled_diffusion}, where timesteps are separated by translation along an approximately orthogonal direction $\boldsymbol{t}_{ortho}$ rather than only by changing their radial spacing~\cite{esser2024scaling}.
This experiment is the most direct implementation of the theory because it enforces complete separation in ambient space, where diffusion happens inside the subspace, while timestep identity is carried by displacement along another.

The results match this prediction closely.
In Table~\ref{tab:time_conditioning_results}, the orthogonal variant reduces the FID from $71.25$ to $6.57$ on class-conditional CIFAR10, from $66.03$ to $7.74$ on CIFAR10-unconditional, and from $120.71$ to $11.38$ on CelebA.
These numbers are much closer to the explicit time-embedding baseline than to the no-time baseline, and the corresponding samples in Figure~\ref{fig:celeba_conditioning} are visibly cleaner.
This is precisely the qualitative behavior expected from the theory. Once different timesteps occupy parallel slices, the reverse model no longer has to resolve ambiguous overlaps inside a single noisy shell. The class-conditional CIFAR10 result is also conceptually important.
Here, we do not provide time or class information, but use class-specific time directions so that class label and timestep are both encoded geometrically.
According to Figure \ref{fig:merge_class}, the resulting DDIM samples remain class-consistent even though the method does not rely on an explicit timestep or class embedding, showing that the same disentanglement mechanism may be generalized from time conditioning to class conditioning~\cite{ho2022classifier, dhariwal2021diffusion}, the quantitative result in table \ref{tab:time_conditioning_results} is also valid for our method.

On ImageNet, the orthogonal variant remains competitive with the explicit time-embedding baseline, reaching FID $17.52$ versus $18.07$ while closely matching its precision and recall in Table~\ref{tab:imagenet_results}.
It does not produce the best FID among the ImageNet variants, but it is still the clearest validation of the complete time-space separation mechanism because it leaves the original VP-style schedule intact and solves the ambiguity by changing geometry alone.
In other words, the schedule-based variants show that weakening overlap is already sufficient, while $\boldsymbol{t}_{ortho}$ shows that removing overlap by construction is sufficient as well.

\begin{figure}[t]
    \centering
    \includegraphics[width=0.9\textwidth, height=0.37\textwidth]{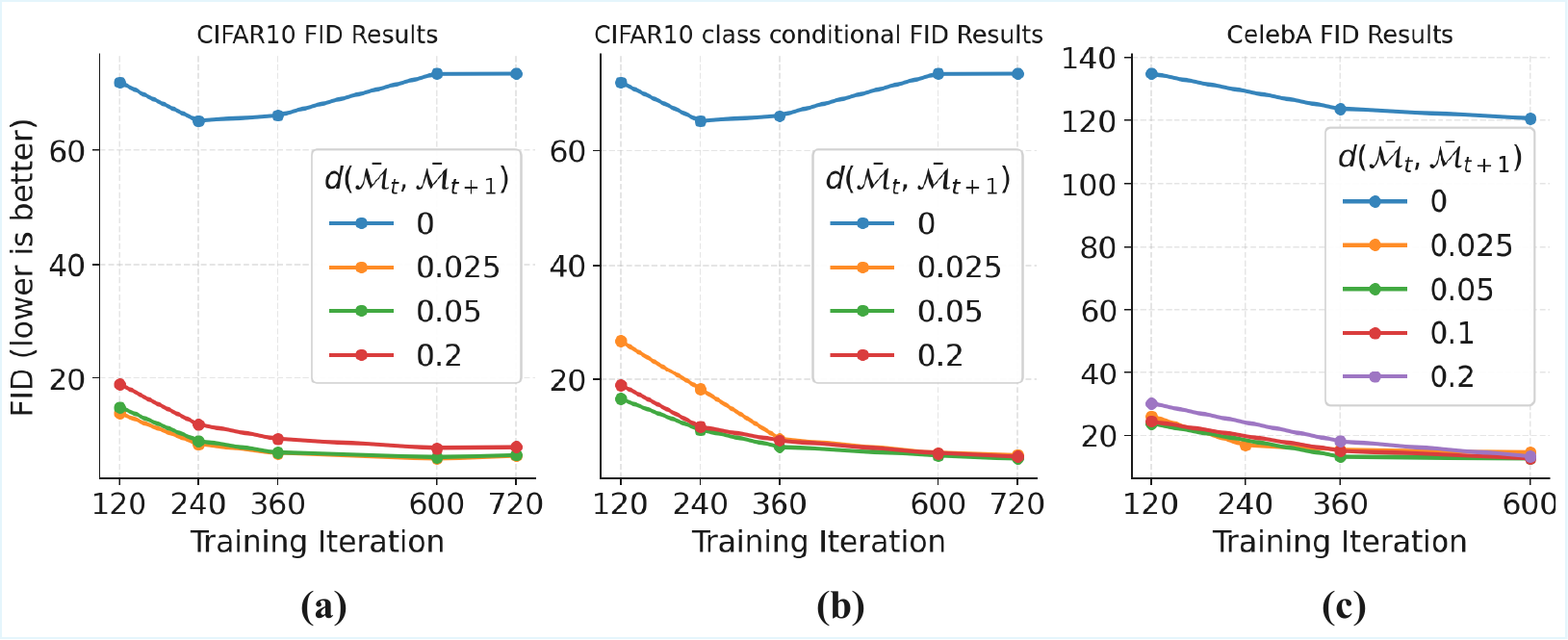}
    \caption{Ablation on the manifold spacing parameter $d(\mathcal{M}_t, \mathcal{M}_t)$ under DDIM sampling method. Subfigure (a) shows CIFAR10-unconditional, (b) shows class-conditional CIFAR10, and (c) shows the CelebA generation result. Each curve corresponds to a different spacing step length, i.e., the distance between adjacent manifolds induced by the orthogonal time direction. A very small positive spacing already removes most of the failure of the zero-spacing baseline, while excessively large spacing eventually degrades performance.}
    \label{fig:cifar_distance_ablation}
\end{figure}
\begin{table}[t]
    \centering
    \caption{Quantitative comparison on ImageNet under different time-conditioning strategies.}
    \begin{tabular}{lccccc}
    \toprule
    Time-Conditioning & FID$\downarrow$ & sFID$\downarrow$ & IS$\uparrow$ & Precision$\uparrow$  & Recall$\uparrow$  \\
    \midrule
    
    time embedding               & 18.07 & 20.45 & 308.31 & 0.932 & 0.11  \\
    \rowcolor{lightgreen}
    Uniform-radial VP     & 9.33 & 9.87 & 248.5 & 0.895 & 0.236 \\
    \rowcolor{lightgreen}
    Late-expansion VP     & 12.46 & 13.41 & 294.8 & 0.921 & 0.186 \\
    \rowcolor{lightgreen}
    $\boldsymbol{t}_{ortho}$     & 17.52 & 18.48 & 305.18 & 0.933 & 0.113 \\
    \rowcolor{pink}
    None                         & 35.73 & 35.47 & 90.97 & 0.432 & 0.323 \\
    \bottomrule
    \end{tabular}
    \label{tab:imagenet_results}

\end{table}

\subsection{Ablation on manifold spacing and ambient dimension}

Figure~\ref{fig:cifar_distance_ablation} studies the step-length parameter used by the orthogonal method to control the distance between adjacent manifolds.
The main empirical result demonstrates that moving from zero separation to a very small positive separation already yields a dramatic improvement.
For both CIFAR10 settings and for CelebA, spacing step lengths such as $0.025$ or $0.05$ eliminate most of the catastrophic behavior of the zero-separation baseline.
This is exactly what the shell-overlap theory predicts.
Once the gap between neighboring manifolds becomes slightly larger than the effective shell thickness, DDIM can reliably identify the current manifold and the major failure mode disappears.

Increasing the spacing further still helps in some settings, but the gain is much smaller and the optimum becomes dataset-dependent.
More importantly, the curves also show that making adjacent manifolds too far apart can hurt generation quality.
This is again geometrically sensible: once the manifolds are already separated enough to avoid mixing, further enlarging the gap makes the reverse transition less local and forces the denoiser to bridge unnecessarily large distances between consecutive timesteps.
The ablation therefore indicates a threshold rather than a monotonic increasing-some separation is essential, a small amount is usually sufficient, and overly aggressive separation is counterproductive.

We also test whether the input-dimension effect from the toy experiments remains visible on real images by padding CIFAR10 from $32 \times 32 \times 3$ to $64 \times 64 \times 3$.
The corresponding numbers are reported in Table~\ref{tab:input_dimension_ablation}.
Unlike the toy setting, the effect on real images is relatively small and not strictly monotonic.
This is consistent with the linear-manifold interpretation used throughout the theory section, which natural images already live on a low-dimensional manifold inside a very high-dimensional ambient space, so additional zero-padding changes the nominal ambient dimension much more than it changes the intrinsic geometry of the data distribution.
As a result, changing the schedule or explicitly separating neighboring manifolds has a much stronger effect than simply padding the image space.

\begin{table}[t]
  \caption{Ablation study on the effect of input dimension.}
  \label{tab:input_dimension_ablation}
  \centering
  \resizebox{0.8\columnwidth}{!}{
  \begin{tabular}{lcccccc}
    \toprule
    \multirow{2}{*}{\textbf{Time Conditioning}} 
      & \multicolumn{3}{c}{\textbf{original input $D=32\times32\times3$}} 
      & \multicolumn{3}{c}{\textbf{padding to $D=64\times64\times3$}}  \\
    \cmidrule(lr){2-4} \cmidrule(lr){5-7}
      & FID$\downarrow$ 
      & Precision$\uparrow$ & Recall$\uparrow$
      & FID$\downarrow$ 
      & Precision$\uparrow$ & Recall$\uparrow$ \\
    \midrule

    Uniform-radial VP        &10.59   &0.682 &0.386 &12.63   & 0.756 & 0.386          \\

    Late-expansion VP           &13.66   & 0.689 &0.450  & 12.74 & 0.753 & 0.362          \\

    \bottomrule
  \end{tabular}
  }
\end{table}

Overall, the ablations reinforce the main claim of this paper.
The relevant variable for time-unconditional DDIM is not whether the model receives a symbolic timestep token, but whether the noisy manifolds are separated enough for the sample geometry to identify the current denoising state.

\section{Discussion}
\label{sec:discussion}

Our results suggest that explicit time conditioning is not a fundamental requirement for diffusion-based generation.
What matters is whether the forward process organizes noisy samples from different timesteps into distinguishable geometric regions.
When the noisy manifolds overlap, a time-unconditioned denoiser receives conflicting supervision, and the generation quality deteriorates.
When they are disentangled, the current noise level can be inferred directly from sample geometry.

This viewpoint clarifies the practical gap between standard DDIM and flow matching, and it also motivates a broader design principle: conditioning information can often be removed from the network if it is instead encoded into the geometry of the data trajectory.
In our case, this leads to two complementary strategies.
Rescheduling improves the spacing between noisy manifolds while preserving the usual model architecture, and orthogonal time-space decomposition removes timestep ambiguity more fundamentally by embedding time directly into the ambient space.

Beyond timestep conditioning, the same construction extends naturally to class-conditioned generation.
This indicates that some conditioning mechanisms in diffusion models may be replaced by geometric disentanglement rather than additional embedding modules.
More broadly, our analysis offers a manifold-based perspective on why high-dimensional diffusion can remain effective despite the apparent difficulty of modeling complex multimodal distributions.

An immediate next step is to connect the theory more tightly to implementation details, including how to estimate the orthogonal time directions robustly, how schedule design interacts with resolution and intrinsic dimension, and how the same idea scales to larger conditional generation settings.
We leave those practical details and formal derivations to the following section.

\section{Details}
\label{sec:details}

This section collects technical derivations and implementation details that are intentionally separated from the main narrative.
We keep it concise and focus on the derivations and implementation details needed to reproduce the reported experiments.

\subsection{Background formulas for score-based diffusion and flow matching}

For completeness, we collect the standard formulas used by the background discussion in Section~\ref{sec:main}.
Diffusion models can be expressed by using the stochastic differential equations (SDE)~\cite{oksendal2013stochastic,le2016brownian}. For a dataset in $\mathbb{R}^{D}$, the continuous-time forward diffusion process can be written as using It\^o SDE~\cite{song2019generative, song2020denoising}
\begin{equation}
d\boldsymbol{x}
=
\boldsymbol{f}(\boldsymbol{x}, t)\, dt
+
\boldsymbol{g}(t)\, d\boldsymbol{w},
\label{eq:ddpmsde}
\end{equation}
where $\boldsymbol{w}$ is standard Brownian motion.
Under the variance-preserving parameterization of DDPM/DDIM~\cite{ho2020denoising, song2020denoising}, the closed-form forward marginal is written as \eqref{eq:forward_process}. The reverse-time dynamics depend on the time-indexed score field and can be expressed as~\cite{ANDERSON1982313,song2019generative, song2020denoising}
\begin{equation}
d\boldsymbol{x}
=
\left[
\boldsymbol{f}(\boldsymbol{x},t)
-
\boldsymbol{g}(t)^2\nabla_{\boldsymbol{x}}\log p(\boldsymbol{x}|t)
\right]dt
+
\boldsymbol{g}(t)\,d\clo{\boldsymbol{w}},
\label{eq1}
\end{equation}
and the denoiser is commonly trained by denoising score matching~\cite{vincent2011connection,song2020sliced}:
\begin{equation}
\mathcal{L}(\boldsymbol{\theta})
=
\mathbb{E}_{t,\boldsymbol{x}_0,\boldsymbol{x}_t}
\left[
\left\|
\boldsymbol{s}_\theta(\boldsymbol{x}_t,t)
-
\nabla_{\boldsymbol{x}_t}\log p(\boldsymbol{x}_t|t)
\right\|_2^2
\right].
\label{eq3}
\end{equation}

The marginal density evolves according to the Fokker--Planck equation~\cite{pavliotis2014stochastic,song2020score}
\begin{equation}
\frac{\partial p(\boldsymbol{x}|t)}{\partial t}
=
-\nabla_{\boldsymbol{x}}\cdot
\left[
\left(
\boldsymbol{f}(\boldsymbol{x},t)
-
\frac{1}{2}\boldsymbol{g}(t)^2\nabla_{\boldsymbol{x}}\log p(\boldsymbol{x}|t)
\right)
p(\boldsymbol{x}|t)
\right],
\label{eq:fp}
\end{equation}
which yields the probability-flow ODE~\cite{song2020score,song2020denoising}
\begin{equation}
\begin{aligned}
d\boldsymbol{x}
&=
\left[
\boldsymbol{f}(\boldsymbol{x},t)
-
\frac{1}{2}\boldsymbol{g}(t)^2\nabla_{\boldsymbol{x}}\log p(\boldsymbol{x}|t)
\right]dt \\
&= \boldsymbol{v}(\boldsymbol{x},t)\,dt.
\end{aligned}
\label{eq_pfode}
\end{equation}

Flow matching instead specifies a probability path directly.
A standard choice is the linear interpolation between data and noise~\cite{lipman2022flow,liu2023flow,lipman2024flow}:
\begin{equation}
\boldsymbol{x}_t
=
t\,\boldsymbol{x}_0
+
(1-t)\,\boldsymbol{z},
\qquad t\in[0,1],
\label{linear_interpolant}
\end{equation}
with conditional vector field
\begin{equation}
\boldsymbol{v}^\star(\boldsymbol{x}_t,t)
=
\frac{\boldsymbol{x}_0-\boldsymbol{x}_t}{1-t},
\end{equation}
and training objective
\begin{equation}
\mathcal{L}(\boldsymbol{\theta})
=
\mathbb{E}_{\boldsymbol{x}_0,\boldsymbol{z},t\in[0,1]}
\left[
\left\|
\boldsymbol{v}_\theta(\boldsymbol{x}_t,t)
-
\frac{\boldsymbol{x}_0-\boldsymbol{x}_t}{1-t}
\right\|_2^2
\right].
\label{eq:cfm}
\end{equation}

Both DDPM/DDIM and flow matching can therefore be summarized by the unified forward notation
\begin{equation}
\boldsymbol{x}_t
=
\boldsymbol{c}(t)\,\boldsymbol{x}_0
+
\boldsymbol{\sigma}(t)\,\boldsymbol{z},
\label{eq:forward_unified_background}
\end{equation}
with different choices of $\boldsymbol{c}(t)$ and $\boldsymbol{\sigma}(t)$.
In the augmented trajectory view, time can be treated as an additional coordinate orthogonal to the sample dynamics~\cite{esser2024scaling}.

\subsection{Proofs for Section~\ref{sec:time_disentangled_diffusion}}

\begin{proof}[Proof of Proposition~\ref{prop:noisy_data_manifold}]
We consider the DDPM noise schedule as the representative case, since the analysis for flow matching proceeds in an analogous manner.
By ~\eqref{eq:forward_process}, every clean sample $\boldsymbol{x}_0\in\mathcal{M}_0$ can be written as
\begin{equation*}
    \boldsymbol{x}_0 = (x^0_1,\cdots,x^0_{d'},0,\cdots,0).
\end{equation*}
Therefore, for $\boldsymbol{x}_t = (x^t_1,\cdots,x^t_D)$,
\begin{equation*}
    \left\{
        \begin{aligned}
            x^t_i &= \sqrt{\bar{\alpha}_t}x^0_i + \sqrt{1-\bar{\alpha}_t}\, \boldsymbol{z}_i,\quad i=1,\cdots,d',\\
            x^t_i &= \sqrt{1-\bar{\alpha}_t}\, \boldsymbol{z}_i,\quad i=d'+1,\cdots,D.
        \end{aligned}
    \right.
\end{equation*}
It follows that
\begin{equation*}
    d(\boldsymbol{x}_t, \mathcal{M}_0)^2
    =
    \sum_{i=d'+1}^{D}\left(x^t_i\right)^2
    =
    (1-\bar{\alpha}_t)\sum_{i=d'+1}^{D}\boldsymbol{z}_i^2.
\end{equation*}
Because $\boldsymbol{z} \sim \mathcal{N}(\boldsymbol{0},\boldsymbol{I})$, the random variable $\sum_{i=d'+1}^{D}\boldsymbol{z}_i^2$ is $\chi^2$ distributed with $D-d'$ degrees of freedom \cite{hogg2013introduction}.
Applying the Laurent--Massart bound \citep{laurent2000adaptive} gives, for any $\varepsilon > 0$,
\begin{equation*}
    \mathbb{P}\Bigl(r(t)\sqrt{1-2\sqrt{\varepsilon}} \leq d(\boldsymbol{x}_t, \mathcal{M}_0) \leq r(t)\sqrt{1+2\sqrt{\varepsilon}+2\varepsilon}\Bigr)
    \geq
    1 - 2e^{-2(D-d')\varepsilon},
\end{equation*}
where $r(t)=\sqrt{(1-\bar{\alpha}_t)(D-d')}$.
Since $d'\ll D$, the concentration becomes sharp, so noisy data concentrate around the cylinder-like manifold defined in Equation~\eqref{eq:noisy_manifold}.
\end{proof}

\begin{proof}[Proof of Proposition~\ref{prop:exp_schedule}]
For the exp-$\sigma_t$ schedule,
\begin{equation*}
    \Delta \sigma_t
    = \sigma_{t+\Delta t}-\sigma_t
    = \frac{e^{t+\Delta t}-e^t}{e-1}
    = \left(\sigma_t+\frac{1}{e-1}\right)(e^{\Delta t}-1).
\end{equation*}
Since $r(t)=\sigma_t\sqrt{D-d'}$, we obtain
\begin{equation*}
    \Delta r(t)
    = \Delta \sigma_t\sqrt{D-d'}
    = \left(\sigma_t+\frac{1}{e-1}\right)(e^{\Delta t}-1)\sqrt{D-d'}.
\end{equation*}
If $e^{\Delta t}-1 > \kappa_{\varepsilon}$, then
\begin{equation*}
    \Delta r(t)
    > \kappa_{\varepsilon}\sigma_t\sqrt{D-d'}
    = \kappa_{\varepsilon} r(t),
\end{equation*}
which is exactly the sufficient condition in \eqref{eq:avoid_mixing}.
Therefore the exp-$\sigma_t$ schedule prevents overlap between adjacent noisy manifolds for all timesteps.
\end{proof}

\subsection{Experimental settings}

Across all experiments, the backbone and optimization settings are held fixed within each experimental regime, and only the time-conditioning mechanism or the forward schedule is changed.
This design ensures that the comparisons isolate the effect of manifold mixing versus manifold separation.

Throughout the experiments, we use five labels in the main text: None, Uniform-radial, Late-expansion, $\boldsymbol{t}_{ortho}$, and $t_{emb}$.
These correspond respectively to a model without explicit time conditioning under the standard variance-preserving trajectory, a time-unconditional model under the Uniform-radial VP schedule, a time-unconditional model under the Late-expansion VP schedule, the orthogonal time-space disentanglement model, and the standard model with explicit time embeddings.

\paragraph{Toy datasets.}
The toy experiments use 2D Gaussian mixtures and Swiss-roll data as clean manifolds.
To study ambient-dimension effects, each 2D sample is optionally zero-padded to a larger dimension $D \in \{2, 8, 32, 256, 1024\}$, so the first two coordinates contain the visible data and the remaining coordinates are exactly zero at $t=0$.
The default toy forward process uses $100$ diffusion steps, MSE training, Adam with learning rate $10^{-3}$ and betas $(0.9, 0.999)$, batch size $50{,}000$, and $200$ epochs.
Sampling uses DDIM~\cite{song2020denoising} with $\eta = 0$.
We compare a standard timestep-conditioned multilayer perceptron baseline with a time-unconditional manifold-aware toy network.
The latter is designed so that denoising of the visible coordinates depends on the geometry of the padded coordinates, thereby testing whether shell structure alone can reveal timestep information without an explicit temporal input.

\paragraph{CIFAR10 and CelebA with U-Net backbones.}
The small-image experiments use U-Net denoisers~\cite{ronneberger2015u} throughout.
CIFAR10 uses $32 \times 32$ RGB images, while CelebA uses $64 \times 64$ RGB images.
Unless otherwise stated, all models use $1000$ diffusion steps, $\epsilon$-prediction, MSE loss, EMA during training, and DDIM for sampling.
The main U-Net architecture uses base width $128$, channel multipliers $[1, 2, 2, 2]$, and two residual blocks at each resolution stage.
For CIFAR10, attention is inserted at an intermediate resolution stage, while for CelebA it is shifted to a slightly lower-resolution stage; this architecture is kept fixed across all compared variants.
The five compared settings are None, Uniform-radial, Late-expansion, $\boldsymbol{t}_{ortho}$, and $t_{emb}$.
The first three differ only in whether timestep information is absent and whether the forward radius follows the standard VP, Uniform-radial VP, or Late-expansion VP trajectory.
The $\boldsymbol{t}_{ortho}$ setting introduces an orthogonal time direction and a step-length parameter that controls the spacing between adjacent manifolds.
For class-conditional CIFAR10, $\boldsymbol{t}_{ortho}$ is instantiated with class-specific orthogonal directions so that both timestep and class identity are encoded geometrically.

For CIFAR10, the main training configurations use Adam with learning rate $2 \times 10^{-4}$, batch size $128$, gradient clipping $1.0$, learning-rate warmup $5000$, and EMA decay $0.9999$.
CelebA uses the same diffusion length and EMA setting with a lower learning rate $2 \times 10^{-5}$.
The class-conditional CIFAR10 experiments use $10$ classes.
The orthogonal time directions are estimated once from dataset statistics; for conditional generation, class-specific directions are estimated separately for each class.
All reported visual results use $100$ DDIM steps with $\eta = 0$.

\paragraph{ImageNet with diffusion transformers.}
The ImageNet experiments use latent diffusion transformers built on DiT-B/2~\cite{peebles2023scalable, rombach2022high} at $256 \times 256$ resolution.
Images are first mapped into latent tensors of shape $4 \times 32 \times 32$, and diffusion is performed in this latent space.
Training uses AdamW with learning rate $10^{-4}$, betas $(0.9, 0.999)$, zero weight decay, gradient clipping $1.0$, mixed-precision FP16, EMA updates, and distributed training through \texttt{accelerate}.
The effective per-process batch size is $32$.
We compare the same five settings as in the small-image experiments: None, Uniform-radial, Late-expansion, $\boldsymbol{t}_{ortho}$, and $t_{emb}$.
During training, classifier-free guidance~\cite{ho2022classifier, dhariwal2021diffusion} is enabled through label dropout with probability $0.1$.

For the $\boldsymbol{t}_{ortho}$ setting, the orthogonal time direction is estimated offline from latent features as the direction associated with the smallest mode of variation of the latent data matrix; class-conditional variants use the corresponding classwise estimate.
At inference, latent samples are decoded back to image space with the pretrained autoencoder, and the ImageNet results are reported with DDIM sampling, $\eta = 0$, and classifier-free guidance scale $5.5$.

\paragraph{Evaluation metrics.}
For the small-image experiments we report FID, Precision, and Recall.
For ImageNet we report FID, sFID, Inception Score, Precision, and Recall.
Across all three groups of experiments, only the schedule or the time-conditioning mechanism is changed, so the comparisons directly test the geometric claims made in Section~\ref{sec:time_disentangled_diffusion}.



\bibliographystyle{unsrtnat}
\bibliography{reference}



\end{document}